\documentclass{article}

\usepackage[preprint]{neurips_2026}

\usepackage[utf8]{inputenc}
\usepackage[T1]{fontenc}
\usepackage{amsmath}
\usepackage{amssymb}
\usepackage{amsfonts}
\usepackage{amsthm}
\usepackage{bm}
\usepackage{booktabs}
\usepackage{float}
\usepackage{graphicx}
\usepackage{wrapfig}
\usepackage{multirow}
\usepackage{subcaption}
\usepackage{algorithm}
\usepackage{algorithmic}
\usepackage{nicefrac}
\usepackage{microtype}
\usepackage{xcolor}
\usepackage{url}
\usepackage{hyperref}
\usepackage{authblk}
\usepackage{dsfont}
\usepackage{tikz}
\usetikzlibrary{matrix,positioning,arrows.meta,fit,calc,shapes.geometric}

\definecolor{mydarkred}{rgb}{0.6,0,0}
\definecolor{mydarkgreen}{rgb}{0,0.6,0}
\hypersetup{
    colorlinks=true,
    linkcolor=mydarkred,
    citecolor=mydarkgreen
}

\usepackage[capitalize,noabbrev]{cleveref}

\theoremstyle{plain}
\newtheorem{theorem}{Theorem}
\newtheorem{lemma}[theorem]{Lemma}
\newtheorem{proposition}[theorem]{Proposition}

\theoremstyle{definition}

\newtheorem{assumption}[theorem]{Assumption}
\theoremstyle{remark}

\newcommand{\method}{Joint Envelope Conformal Selection}
\newcommand{\abbr}{JECS}
\newcommand{\baselinefull}{Joint Max Conformal Selection}
\newcommand{\baseline}{JMCS}
\newcommand{\metricfull}{global contamination rate}
\newcommand{\metricabbr}{GCR}
\newcommand{\fdpfull}{global contamination proportion}
\newcommand{\fdpabbr}{GCP}
\newcommand{\metricop}{\ensuremath{\mathrm{GCR}}}
\newcommand{\fdpop}{\ensuremath{\mathrm{GCP}}}

\newcommand{\cD}{\mathcal{D}}

\newcommand{\cS}{\mathcal{S}}
\newcommand{\cP}{\mathcal{P}}
\newcommand{\cX}{\mathcal{X}}
\newcommand{\cI}{\mathcal{I}}
\newcommand{\cN}{\mathcal{N}}
\newcommand{\E}{\mathbb{E}}
\newcommand{\Pp}{\mathbb{P}}
\newcommand{\ind}{\mathbf{1}}

\newcommand{\viol}[1]{#1$^{\dagger}$}
\newcommand{\figroot}{figures}

\title{Provable Joint Decontamination for Benchmarking Multiple Large Language Models}

\author[1,2]{Zhenlong Liu}
\author[1]{Hao Zeng}
\author[1]{Hongxin Wei}

\affil[1]{Department of Statistics and Data Science, Southern University of Science and Technology}
\affil[2]{Shanghai Innovation Institute}

\begin{document}

\maketitle

\begin{abstract}
Benchmark data contamination has become a central challenge in LLM evaluation: when evaluation examples appear in the training data of one or more audited models, reported performance can be inflated and cross-model comparisons become unreliable.
A broad line of training-data detection work designs scores to quantify how strongly a model memorizes a given data point, but these score-based methods lack theoretical guarantees. Recent conformal approaches provide provable false-identification control for a \emph{single model}; however, applying them separately to each model can produce model-specific benchmarks, undermining fair comparison across models. 
In this work, we formalize multi-model benchmark decontamination as a joint selection problem and propose \method{} (\textbf{\abbr{}}), a conformal procedure that enables \metricfull{} (\metricabbr{}) control under stated assumptions.
Specifically, \abbr{} computes per-model conformal $p$-values, aggregates them by the per-item maximum, and reconstructs a conservative envelope of the max-$p$ null distribution from right-tail observations above a data-driven threshold.
By applying the adaptive Benjamini--Hochberg (BH) procedure to the envelope-rescaled values, we select a benchmark with provable \metricabbr{} control.
Extensive experiments across various models and benchmarks demonstrate that \abbr{} achieves higher power than the max-$p$ baseline while consistently maintaining the target \metricabbr{} control.
\end{abstract}

\section{Introduction}
\label{sec:intro}

The remarkable achievements of large language models (LLMs) have been driven in large part by the scale and diversity of their pretraining corpora~\citep{kaplan2020scaling,chang2024survey}. 
However, the construction of large pretraining corpora can lead to overlap between training data and evaluation benchmarks, a phenomenon commonly referred to as benchmark data contamination~\citep{sainz2023nlp, deng2024investigating, xu2024benchmark,balloccu2024leak}. Such overlap can overstate reported performance, obscure whether models truly generalize beyond their training data, and weaken the credibility of benchmark-based evaluation.
This highlights the need to construct a decontaminated benchmark for multiple audited models.

To address this issue, a broad line of work~\citep{carlini2021extracting, zhang2025mink,li2024membership} designs detection scores, such as Min-K\%~\citep{shi2024detecting}, to quantify how strongly a model memorizes a given data point. 
Motivated by the lack of theoretical guarantees in these score-based methods, recent work~\citep{liu2026provable} leverages conformal inference to identify training data for a \emph{single model} with provable false-identification control. 
However, applying such a procedure separately to each model can produce model-specific benchmarks, undermining fair comparison across models. 
In practical benchmark evaluation, auditors require a shared decontaminated benchmark that supports fair assessment across multiple models. 
This motivates us to develop methods for selecting \emph{one} shared benchmark with a controlled contamination rate across all audited models. 

% as a membership inference attack (MIA) problem: given a candidate data point and a target model, detection scores such as Min-K\%~\citep{shi2024detecting} quantify how strongly the model memorizes that data point~\citep{carlini2021extracting, zhang2025mink,li2024membership}. 

% To address this issue, a broad line of work considered contamination detection as a membership inference attack (MIA) problem: given a candidate data point and a target model, detection scores such as Min-K\%~\citep{shi2024detecting} quantify how strongly the model memorizes that data point~\citep{carlini2021extracting, zhang2025mink,li2024membership}. 
% However, there is a disconnect between computing these instance-level scores and practically decontaminating an evaluation dataset. To ensure fair assessments, auditors require a universally pure benchmark.
% This motivates us to design methods to select \emph{one} shared benchmark with a controlled contamination rate across multiple audited models.

In this paper, we formulate joint benchmark decontamination as a joint selection problem, where the goal is to select a single benchmark that is decontaminated with respect to multiple audited models. Specifically, we call a sample \emph{jointly pure} if it does not appear in the training data of any audited model, and define the \metricfull{}~(\metricabbr{}) as the expected fraction of contaminated samples in the selected set.
To control the \metricabbr{}, we propose \method{} (\textbf{\abbr{}}), which first obtains a valid $p$-value under the joint null for each candidate by taking the maximum of its per-model conformal $p$-values.   
To recover power, we further reconstruct a conservative envelope function for the cumulative distribution function (CDF) of the max-$p$ null distribution, map each max-$p$ value through the fitted function, and then apply the adaptive BH procedure~\citep{benjamini1995controlling, benjamini2001control}.
We establish theoretical guarantees showing that \abbr{} controls the contamination rate at a user-specified level $\alpha$.

% We prove that our method improves power while maintaining contamination-rate control at a user-specified level $\alpha$.

Extensive experiments on simulated data and real LLM benchmarks demonstrate the effectiveness of \abbr{} for joint benchmark decontamination. Across all settings, \abbr{} controls the \metricfull{} at the user-specified target, while neither the union nor the intersection of per-model conformal selections does so. For instance, on the synthetic setup at $\alpha=0.1$, \abbr{} attains \metricabbr{}~$=0.038$, whereas the union and intersection rules violate the target with \metricabbr{}~$=0.763$ and $0.366$, respectively. Furthermore, our method improves power over the max-$p$ baseline. For example, on ArXivTection with Pythia-6.9B~\citep{biderman2023pythia} at $\alpha=0.1$ using the Min-K\%++ score~\citep{zhang2025mink}, our method improves power from $0.094$ to $0.447$.
Overall, these results show that \abbr{} preserves the contamination-rate control required for joint selection while recovering much of the power lost by the conservative max-$p$ baseline.

% We rigorously evaluate \abbr{} on both simulated and real-world datasets. In simulations with $K=4$, it demonstrates superior calibration to baseline approaches, as evidenced by a tighter alignment between the achieved \fdpfull{}~(\fdpabbr{}) and the nominal target $\alpha$: while naive compositions severely violate the target (Union \fdpabbr{}~$\approx 0.763$--$0.764$ across $\alpha\in\{0.1,0.2,0.3\}$, and Intersection \fdpabbr{}~$=0.366,\,0.614,\,0.757$ on the same grid), \abbr{} attains \fdpabbr{}~$=0.038,\,0.087,\,0.162$. Results on textual (WikiMIA) and scientific-document (ArXivTection) benchmarks with NeoX-20B, Pythia-6.9B, and LLaMA-7B at $K=16$ further validate practicality: across four detection scores and five target levels, \abbr{} keeps the realized \fdpabbr{} at or below the target throughout and improves power over \baseline{}, with average power gains at $\alpha=0.1$ of $+0.055$ on WikiMIA and $+0.315$ on ArXivTection.

We summarize our contributions as follows:
\begin{enumerate}
    \item  We formulate multi-model benchmark decontamination as a joint selection problem, where a candidate item is jointly pure only if it is absent from the training data of every audited model. We introduce the \metricfull{}~(\metricabbr{}) as the contamination-rate criterion for selecting one shared benchmark across multiple audited models.
    \item We propose \method{} (\textbf{\abbr{}}), a joint selection procedure for constructing shared decontaminated benchmarks across multiple audited models.  
    Specifically, \abbr{} mitigates the super-uniform conservatism of the valid max-$p$ baseline, thereby retaining \metricfull{} control while substantially improving power. 
    
    \item We establish theoretical guarantees showing that \abbr{} controls \metricabbr{} under stated conditions. We further validate these guarantees through extensive experiments, demonstrating the effectiveness of \abbr{} on both simulated data and real-world LLM benchmarks.
\end{enumerate}

\section{Preliminary}
\label{sec:background}

\paragraph{Setup.}
Let $\theta$ be a target language model trained on a private corpus $\cD_{\mathrm{train}}(\theta)$, and let $\cX$ denote the input space of token sequences. For a sample $x\in\cX$, the binary membership indicator
\begin{equation}
\label{eq:membership}
    M(x;\theta)=\mathds{1} \{x\in\cD_{\mathrm{train}}(\theta)\}\in\{0,1\}
\end{equation}
records whether $x$ was used to train $\theta$. Specifically, $M=1$ indicates a member and $M=0$ a non-member. Throughout the paper, we audit $K$ such models $\{\theta_k\}_{k=1}^K$ on a candidate pool $\{x_i\}_{i=1}^n$, and write $M_i^k=M(x_i;\theta_k)$ for the membership status of candidate instance $i$ under model $\theta_k$.

\paragraph{Training data detection.}
Training data detection, also termed membership inference attack (MIA), aims to infer the membership indicator \(M(x;\theta)\) from query access to the model \(p_\theta\). The standard pipeline assigns each sample a detection score $T(x;\theta)$ and predicts membership by a level-set rule, with the inequality direction determined by the score convention. In this paper, we orient $T$ so that smaller values provide stronger evidence of non-membership, and hence a threshold rule may be written as $\widehat{M}=\mathds{1}\{T(x;\theta)\ge\tau\}$ with a threshold $\tau$ determined by a validation set~\citep{shokri2017membership,ye2022enhanced}. A representative score is Min-K\%++~\citep{zhang2025mink}, which for a sequence $x=(t_1,\ldots,t_L)$ averages a normalized log-probability over the bottom-$K\%$ tokens:
\begin{equation}
\label{eq:minkpp}
    T_{\textsc{Min-K\%++}}(x;\theta)
    =
    \frac{1}{|\cI|}\sum_{\ell\in\cI}
    \frac{\log p_\theta(t_\ell\mid t_{<\ell})-\mu_\ell}{\sigma_\ell},
    \qquad
    \mu_\ell=\E_{t\sim p_\theta(\cdot\mid t_{<\ell})}[\log p_\theta(t\mid t_{<\ell})],
\end{equation}
where $\sigma_\ell$ is the standard deviation of $\log p_\theta(\cdot\mid t_{<\ell})$ under the next-token distribution and $\cI\subset\{1,\ldots,L\}$ indexes the lowest $K\%$ tokens by per-token log-probability. Other widely used scores include perplexity~\citep{carlini2021extracting}, Min-K\%~\citep{shi2024detecting}, and Modified Entropy~\citep{song2021systematic}. 
These scores provide useful per-instance evidence of memorization by a specific model, but they do not provide theoretical guarantees for the resulting predictions.

% but benchmark curation requires a different target: selecting one shared subset that supports fair evaluation across multiple audited models. 
% This motivates us to develop a joint selection method for constructing a shared benchmark with a controlled contamination rate.

\paragraph{Conformal inference for training data identification.} To provide provable evidence, recent work~\citep{liu2026provable} formulates training-data identification as a multiple-hypothesis testing problem. For each audited model $\theta_k$, the model-wise null and alternative are:
\begin{equation}
    H_{0,i}^{k}: x_i\in\cD_{\mathrm{train}}(\theta_k),
    \qquad
    H_{1,i}^{k}: x_i\notin\cD_{\mathrm{train}}(\theta_k).
\end{equation}
Rejecting $H_{0,i}^{k}$ means that $x_i$ is identified as pure with respect to $\theta_k$; a false identification occurs when a contaminated item is incorrectly selected.\footnote{The testing direction here is reversed from the convention in~\citet{liu2026provable}, where the null is non-member. We reverse the null because benchmark construction selects non-members.} The conformal $p$-value~\citep{VovkNG03,vovk2005algorithmic} calibrates the detection scores into valid $p$-values:
\begin{equation}
\label{eq:prelim-model-p}
    p_i^k
    =
    \frac{1+\sum_{x_\ell \in\cD_{\mathrm{cal}}}\ind\{T(x_\ell;\theta_k)\le T(x_i;\theta_k)\}}
    {|\cD_{\mathrm{cal}}|+1},
\end{equation}
where $\cD_{\mathrm{cal}}$ is a calibration set of known members of $\theta_k$ drawn from the same covariate distribution as contaminated candidates. The Benjamini--Hochberg (BH) procedure~\citep{benjamini1995controlling} is then used to select a subset:
\begin{equation}
\label{eq:single-model-selection}
    \cS_k
    =
    \left\{i:p_i^k\le\frac{\alpha\,r^{*}}{n}\right\},
    \text{ where }
    r^{*}
    =
    \max\!\left\{r\in\{1,\ldots,n\}:p_{(r)}^k\le\frac{\alpha\,r}{n}\right\},
\end{equation}
where $p_{(1)}^k\le\cdots\le p_{(n)}^k$ are the sorted model-wise $p$-values. In the original training-data-identification setting, prior works~\citep{bates2023testing,liu2026provable} have shown that the expected fraction of $\theta_k$-non-members among the items identified as members can be controlled at a user-specified level $\alpha \in (0, 1)$. However, this guarantee is model-specific: applying the procedure separately to different audited models yields different selected subsets $\cS_1,\ldots,\cS_K$. In practical benchmark evaluation, auditors need a single decontaminated benchmark for a fair comparison across all audited models. This gap motivates the joint benchmark decontamination problem studied next.

\section{Joint Benchmark Decontamination}
\label{sec:method}

\subsection{Problem formulation}
\label{sec:problem}

% We start by formally define what is contaminted data for mutilple models. Recall from the setup in \cref{sec:background}: $K$ audited models $\{\theta_k\}_{k=1}^K$, a candidate pool $\{x_i\}_{i=1}^n$, and per-instance, per-model membership indicators $M_i^k=\ind\{x_i\in\cD_{\mathrm{train}}(\theta_k)\}$.
% For each audited model $\theta_k$, the model-wise null and alternative are
% \begin{equation}
% \label{eq:model-null}
%     H_{0,i}^{k}:\; x_i\in\cD_{\mathrm{train}}(\theta_k),
%     \qquad
%     H_{1,i}^{k}:\; x_i\notin\cD_{\mathrm{train}}(\theta_k);
% \end{equation}
% under $H_{0,i}^{k}$ we call $i$ \emph{contaminated} for $\theta_k$, and under $H_{1,i}^{k}$ we call $i$ \emph{pure} for $\theta_k$.
We formulate the \textbf{joint benchmark decontamination} problem, whose goal is to select one shared benchmark that is decontaminated with respect to all audited models. Unlike the model-wise setting above, contamination in this problem is defined globally across the audited model collection. An instance $x_i$ is \emph{contaminated} if it appears in the training data of at least one audited model, and is \emph{jointly pure} only if it appears in none. Accordingly, the joint null and alternative hypotheses are:
\begin{equation}
\label{eq:global-null}
    H_{0,i}:\; \exists\, k^{*}\in\{1,\ldots,K\},\; x_i\in\cD_{\mathrm{train}}(\theta_{k^{*}}),
    \qquad
    H_{1,i}:\; \forall\, k,\; x_i\notin\cD_{\mathrm{train}}(\theta_k).
\end{equation}

% Ideally, the benchmark should contain only jointly pure instances, meaning that none of its examples appear in the training data of any audited model.
% However, because member and non-member score distributions are not perfectly separable under current training-data detection scores, guaranteeing a selected benchmark that is completely free of contamination is statistically infeasible. 
% A more realistic objective is to return a benchmark whose residual contamination is controlled at a user-specified level.

Rejecting $H_{0,i}$ indicates that $x_i$ is inferred to be jointly pure and is therefore selected for inclusion in the benchmark. We therefore quantify error by the contamination rate within the selected benchmark. Formally, for a selected set $\cS\subseteq\{1,\ldots,n\}$, we define the \fdpfull{} (\fdpabbr{}) and \metricfull{} (\textbf{\metricabbr{}}) as follows:
\begin{equation}
\label{eq:scp}
    \fdpop(\cS)
    = 
    \frac{\sum_{i=1}^n\mathds{1}\{i\in\cS,\;H_{0,i}\text{ is true}\}}
    {1\vee|\cS|},
    \qquad
    \metricop(\cS)=\E[\fdpop(\cS)],
\end{equation}
 where we denote $a\vee b= \max\{a,b\}$. Our goal is to return a subset $\cS$ such that $\metricop(\cS)\le\alpha$, where $\alpha\in(0,1)$ is a user-specified level. Beyond GCR control, we also seek a procedure that selects as many true candidates as possible. To quantify this objective, we define the \textbf{power} as
\begin{equation}
    \mathrm{Power}(\cS)= \E \left[ \frac{ \sum_{i=1}^n \mathds{1}\{i\in \cS, H_{1,i} \text{ is true}\}}{ 1 \vee \sum_{i=1}^n \mathds{1}\{H_{1,i} \text{ is true}\} } \right].
\end{equation}
  An ideal method should control the GCR at the prescribed level $\alpha$ with power as high as possible.

\paragraph{Naive per-model composition fails to control \metricabbr{}.}
\begin{wraptable}{r}{0.52\linewidth}
\vspace{-10pt}
\centering
\caption{Realized \fdpabbr{} of naive per-model composition on synthetic data. 
Entries marked with a $\dagger$ exceed $\alpha$, indicating failure to meet the \metricabbr{} target.
}
\label{tab:motivation-union-intersection}
\vspace{1pt}
\footnotesize
\setlength{\tabcolsep}{3.5pt}
\begin{tabular}{lccc}
\toprule
Procedure & $\alpha=0.1$ & $\alpha=0.2$ & $\alpha=0.3$ \\
\midrule
Union $\cup_k\cS_k$ & \viol{0.763} & \viol{0.764} & \viol{0.764} \\
Intersection $\cap_k\cS_k$ & \viol{0.366} & \viol{0.614} & \viol{0.757} \\
\abbr{} & 0.038 & 0.087 & 0.162 \\
\bottomrule
\end{tabular}
\vspace{-10pt}
\end{wraptable}
A natural but invalid approach is to first form the model-wise selections $\cS_1,\ldots,\cS_K$ from the conformal $p$-values in \Cref{eq:prelim-model-p} using the single-model rule in \Cref{eq:single-model-selection}, and then combine them by union or intersection. The union $\cup_k\cS_k$ is too liberal: an item may be selected because it appears pure for one model while being contaminated for another. The intersection $\cap_k\cS_k$ is more conservative, but still lacks \metricabbr{} control because the denominator changes after composition and the model-wise selections are dependent. As shown in Table~\ref{tab:motivation-union-intersection}, both composition violate the target on a controlled synthetic setup with $K=4$ audited models and known joint-purity labels (full configuration in \Cref{app:synthetic-motivation}): Union stays near $\fdpop\approx0.76$, and Intersection increasingly exceeds $\alpha$ as $\alpha$ grows. Thus, joint \metricabbr{} control for \Cref{eq:global-null} requires testing $H_{0,i}$ directly through an aggregated statistic rather than composing per-model decisions.
 
\subsection{A max-$p$ baseline: Joint Max Conformal Selection (JMCS)}
\label{sec:scaffold}

We now adapt the model-wise conformal $p$-values from \Cref{sec:background} to the joint selection problem.
Throughout this section, we use a shared calibration set $\cD_{\mathrm{cal}}=\{x_\ell\}_{\ell=n+1}^{n+m}$ whose items are known members of every audited model, i.e., $x_\ell\in\cD_{\mathrm{train}}(\theta_k)$ for all $k\in \{1, \dots, K\}$,\footnote{We discuss practical ways to construct such calibration sets in \cref{app:calibration-feasibility}.} and are drawn from the same candidate-pool covariate distribution.

Applying the conformal construction in \Cref{eq:prelim-model-p} to each audited model yields model-wise $p$-values $\{p_i^k\}_{k=1}^K$.
By exchangeability, for every model-wise null $H_{0,i}^k$,
\begin{equation}
\label{eq:model-validity}
    \Pp(p_i^k\le t\mid H_{0,i}^k)\le t,\qquad t\in[0,1].
\end{equation}

To test the joint null $H_{0,i}$ via one statistic, we aggregate the $K$ model-wise conformal $p$-values by their maximum:
\begin{equation}
\label{eq:max-p}
    p_i^*=\max_{1\le k\le K}p_i^k.
\end{equation}
The aggregated value $p_i^*$ is small only if every model's $p$-value is small, which is the operational meaning of joint purity. The following lemma states that $p_i^*$ is itself a valid $p$-value for the joint null.

\begin{lemma}[Validity of the max-$p$]
\label{lem:max-valid}
If \cref{eq:model-validity} holds for each model-wise null, then $p_i^*$ is valid for the joint null:
\begin{equation}
    \Pp(p_i^*\le t\mid H_{0,i})\le t,\qquad t\in[0,1].
\end{equation}
\end{lemma}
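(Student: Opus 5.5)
The plan is to reduce the joint-null statement to a single model-wise null by exploiting the existential structure of $H_{0,i}$ in \cref{eq:global-null}. Under $H_{0,i}$ there is an index $k^{*}\in\{1,\ldots,K\}$ with $x_i\in\cD_{\mathrm{train}}(\theta_{k^{*}})$, so $H_{0,i}^{k^{*}}$ holds. Since $p_i^*$ is the maximum over $k$, we have $p_i^*\ge p_i^{k^{*}}$ pointwise, and therefore
\begin{equation*}
    \{p_i^*\le t\}\subseteq\{p_i^{k^{*}}\le t\}\qquad\text{for every } t\in[0,1].
\end{equation*}
Taking probabilities under $H_{0,i}$ and invoking \cref{eq:model-validity} for model $k^{*}$ gives $\Pp(p_i^*\le t\mid H_{0,i})\le\Pp(p_i^{k^{*}}\le t\mid H_{0,i}^{k^{*}})\le t$. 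This is the claim.

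The steps are as follows. First, fix $t$ and condition on the membership configuration $(M_i^1,\ldots,M_i^K)$ of item $i$. Second, under $H_{0,i}$ this configuration has at least one coordinate equal to $1$. Choose $k^{*}$ deterministically as a function of the configuration, for example the smallest $k$ with $M_i^k=1$. Third, apply the event inclusion above together with \cref{eq:model-validity}. Finally, if $H_{0,i}$ is treated as a random event, average over the possible configurations, each of which contributes at most $t$.

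The only delicate point is making sure the conditioning in the middle inequality is legitimate. We must not condition on the data-dependent event $\{p_i^*\le t\}$ or select $k^{*}$ using the $p$-values. That is why I would fix the membership pattern, which is a property of the item and not of the scores. I would also use that \cref{eq:model-validity} holds conditionally on the pattern, which follows from exchangeability between the calibration members and the null candidate under model $k^{*}$. Alternatively, one can avoid choosing $k^{*}$ altogether with a union-free bound: for each pattern in the null set, $\Pp(p_i^*\le t\mid\text{pattern})\le\min_{k:M_i^k=1}\Pp(p_i^k\le t\mid\text{pattern})\le t$. With either route, no dependence assumptions across models are needed. This is precisely why the max-$p$ is valid but conservative.
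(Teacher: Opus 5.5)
Your argument is correct and is essentially the paper's proof: you fix a witness $k^*$ with $H_{0,i}^{k^*}$ true, use $p_i^*\ge p_i^{k^*}$ to get $\{p_i^*\le t\}\subseteq\{p_i^{k^*}\le t\}$, and invoke \cref{eq:model-validity} for model $k^*$. Conditioning on the membership pattern, and requiring validity conditionally on it, makes explicit what the paper's one-line proof leaves implicit by treating $H_{0,i}$ as a fixed property of the item; this is the right fix if memberships are random, since validity given $H_{0,i}^{k}$ averaged over the other models' memberships would not by itself suffice.
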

\begin{proof}
Under $H_{0,i}$, there exists $k^*$ such that $H_{0,i}^{k^*}$ is true. Since $p_i^*=\max_{k}p_i^k\ge p_i^{k^*}$, we have $\Pp(p_i^*\le t\mid H_{0,i})\le\Pp(p_i^{k^*}\le t\mid H_{0,i}^{k^*})\le t$.
\end{proof}

\cref{lem:max-valid} immediately delivers a baseline procedure: apply the Benjamini--Hochberg (BH)~\citep{benjamini1995controlling} step-up rule to the maxima $\{p_i^*\}_{i=1}^n$ at level $\alpha$. Letting $p_{(1)}^*\le\cdots\le p_{(n)}^*$ be the sorted maxima, the rule selects
\begin{equation}
\label{eq:bh-baseline}
    \cS
    =
    \left\{i:p_i^*\le\frac{\alpha\,r^{*}}{n}\right\},
    \text{ where }
    r^{*}
    =
    \max\!\left\{r\in\{1,\ldots,n\}:p_{(r)}^*\le\frac{\alpha\,r}{n}\right\},
\end{equation}
 We refer to this procedure as the \baselinefull{} (\baseline{}). \cref{lem:max-valid} together with the standard BH analysis~\citep{benjamini1995controlling} yields finite-sample \metricabbr{} control under the joint null. \baseline{} is the natural minimum-viable procedure for joint pure benchmark selection, and to our knowledge no prior work has analyzed it as such; we use it as the baseline against which \abbr{} is compared in \cref{sec:experiments}, and summarize the full procedure in \cref{alg:jmcs} of \cref{app:jmcs}.

\paragraph{The super-uniformity tax.}
\begin{wrapfigure}{r}{0.5\linewidth}
\vspace{-0.8em}
\centering
\includegraphics[width=0.95\linewidth]{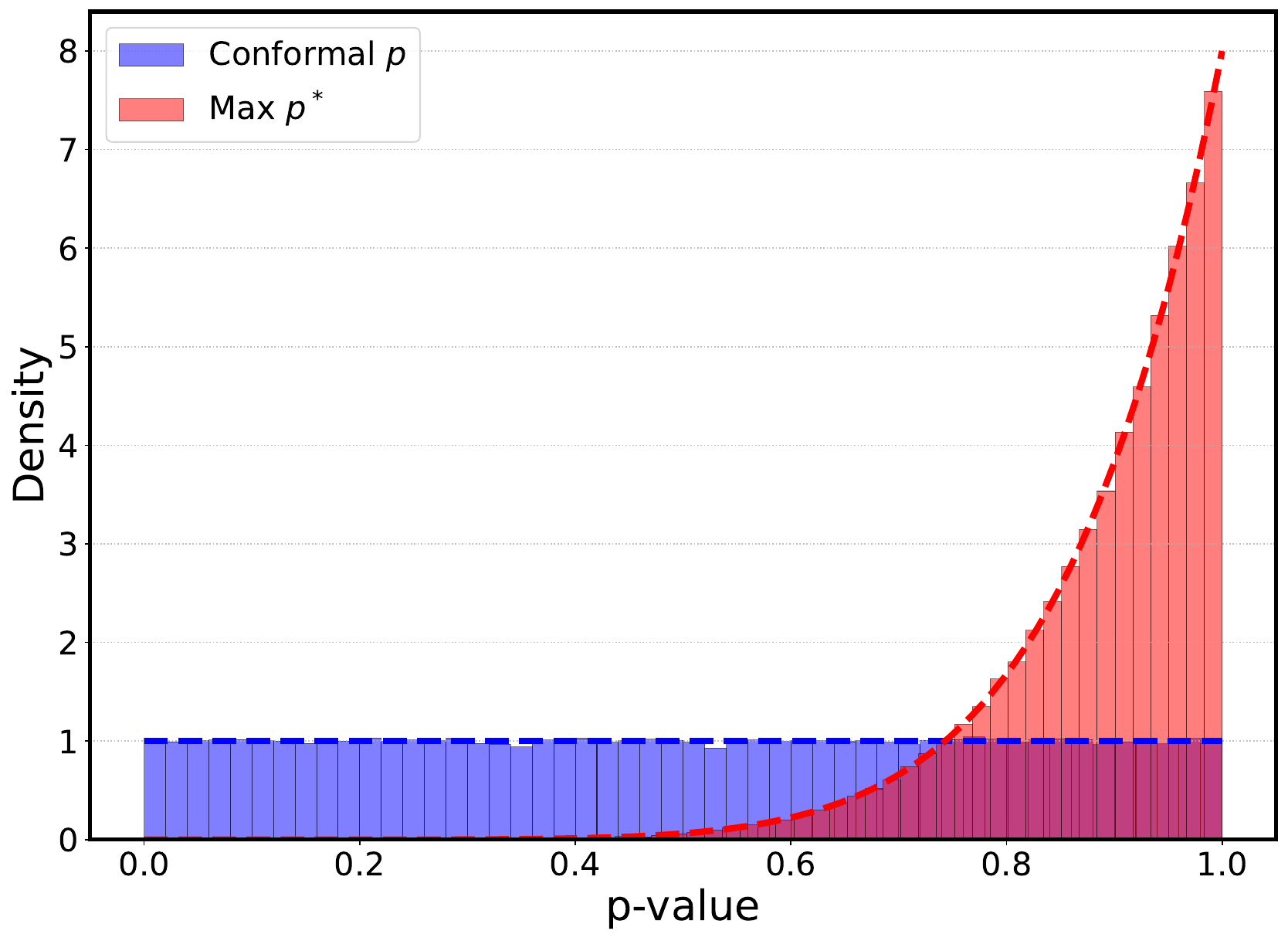}
\caption{Null density of the per-model conformal $p$ versus $p_i^*=\max_k p_i^k$ at $K=8$.}
\label{fig:super-uniformity-tax}
\vspace{-1.0em}
\end{wrapfigure}
\baseline{} is valid in finite samples but substantially conservative. The BH cutoff $\alpha r/n$ is calibrated to the uniform scale, whereas under $H_{0,i}$ the maximum statistic $p_i^*=\max_{k}p_i^k$ is strictly super-uniform: in the stylized homogeneous null with $p_i^1,\ldots,p_i^K\stackrel{\mathrm{iid}}{\sim}\mathrm{Unif}(0,1)$, the null CDF is $F_0(t)=t^K$ and the density $f_0(t)=Kt^{K-1}$ vanishes at $t=0$ and concentrates near $t=1$ (\cref{fig:super-uniformity-tax}, $K=8$). Almost all null mass therefore sits above the BH cutoff: a rejection requires $p_i^*\le\alpha r/n$, far below the natural null scale $(r/n)^{1/K}$, and the gap widens as $\alpha$ tightens or $K$ grows. Although \cref{lem:max-valid} guarantees \metricabbr{} control, this procedure can lose substantial power as $K$ increases.

\subsection{\method{} (\abbr{})}
\label{sec:jcs-method}

% \abbr{} replaces \baseline{}'s two conservative choices with data-adaptive ones while preserving validity on the joint null. Concretely, it (i)~reconstructs a right-tail envelope $\widehat F_{\mathrm{fit}}$ that dominates the null CDF $F_0$ of $p_i^*$ and transforms each $p_i^*$ to $\widetilde p_i=\widehat F_{\mathrm{fit}}(p_i^*)$, undoing the super-uniformity tax; and (ii)~feeds $\{\widetilde p_i\}$ into Storey-BH with a data-adaptive null-proportion estimate $\widehat\pi_0$. \cref{alg:hatg} summarizes the procedure.

As discussed above, $p_i^*$ is expected to follow a monotone increasing distribution on $[0,1]$ under the null, resembling a $\mathrm{Beta}$ distribution and concentrating near 1. While this preserves validity, it often leads to overly conservative selection. A natural remedy is therefore to estimate the null CDF of $p_i^*$ and use the estimated CDF to recalibrate $p_i^*$, improving selection power while retaining validity.

% By previous discussion, the distribution of $p^*$ under null is likely a monotone increasing Beta distribution concentrated near 1, which lead to valid but far conservation selection results. Hence, an intuitive idea to estimate the CDF of the $p^*$ under null and then rescale the $p^*$ to improve the power.

\begin{algorithm}[t]
\caption{\method{} (\abbr{})}
\label{alg:hatg}
\begin{algorithmic}[1]
\REQUIRE Candidate items $\{x_i\}_{i=1}^n$, audited models $\{\theta_k\}_{k=1}^K$, calibration data $\cD_{\mathrm{cal}}=\{x_i\}_{i=n+1}^{n+m}$, target level $\alpha$, tail threshold $\lambda$.
\FOR{$k=1,\ldots,K$}
    \STATE Construct conformal $p$-values $\{p_i^k\}_{i=1}^n$ via \cref{eq:prelim-model-p}.
\ENDFOR
\STATE Aggregate $p_i^*=\max_k p_i^k$ for $i=1,\ldots,n$.
\STATE Estimate $\widehat g(\lambda^+)$ and $\widehat G_n$ from the right tail $\{p_i^*:p_i^*>\lambda\}$.
\STATE Form the envelope $\widehat F_{\mathrm{fit}}$ via \cref{eq:anchor} and \cref{eq:fit-envelope}.
\STATE Transform $\widetilde p_i=\widehat F_{\mathrm{fit}}(p_i^*)$ and estimate $\widehat\pi_0$ via \cref{eq:pi0}.
\STATE Return the selected set $\cS$ via \cref{eq:final-set}.
\end{algorithmic}
\end{algorithm}

Let $F_0$ and $F_1$ denote the CDFs of $p_i^*$ under contaminated and pure items, and let $F_{\mathrm{mix}}=\pi_0F_0+(1-\pi_0)F_1$ be the marginal mixture. The empirical asymmetry that drives \abbr{} is that pure items concentrate near the left tail of $p_i^*$, while the right tail $\{p_i^*>\lambda\}$ is enriched for contaminated items. We exploit this asymmetry by anchoring a conservative envelope to the right tail and propagating it leftward. For a fixed threshold $\lambda\in(0,1)$, define the right-tail conditional CDF
\begin{equation}
\label{eq:right-tail}
    G(x)=\Pp(P^*\le x\mid P^*>\lambda)
    =\frac{F(x)-F(\lambda)}{1-F(\lambda)},\qquad x\in[\lambda,1],
\end{equation}
where $P^*$ denotes a generic draw from the distribution with CDF $F$. Let $g(\lambda^+)$ denote the corresponding right-boundary density. From the observed maxima $\{p_i^*\}_{i=1}^n$, \abbr{} estimates the right-tail mixture CDF $\widehat G_n$ on $[\lambda,1]$ and the boundary density $\widehat g(\lambda^+)$, both implemented by a $k$-nearest-neighbor estimator. The fitted envelope at the anchor is
\begin{equation}
\label{eq:anchor}
    \widehat F_{\mathrm{fit}}(\lambda)
    =
    \frac{\lambda\,\widehat g(\lambda^+)}
    {1+\lambda\,\widehat g(\lambda^+)},
\end{equation}
and the global envelope is the piecewise reconstruction
\begin{equation}
\label{eq:fit-envelope}
    \widehat F_{\mathrm{fit}}(x)
    =
    \begin{cases}
    \dfrac{\widehat F_{\mathrm{fit}}(\lambda)}{\lambda}\,x,
    & x\in(0,\lambda],\\[0.9em]
    \widehat F_{\mathrm{fit}}(\lambda)
    +\bigl(1-\widehat F_{\mathrm{fit}}(\lambda)\bigr)\widehat G_n(x),
    & x\in(\lambda,1].
    \end{cases}
\end{equation}
\begin{wrapfigure}{r}{0.48\linewidth}
\vspace{-0.8em}
\centering
\includegraphics[width=0.95\linewidth]{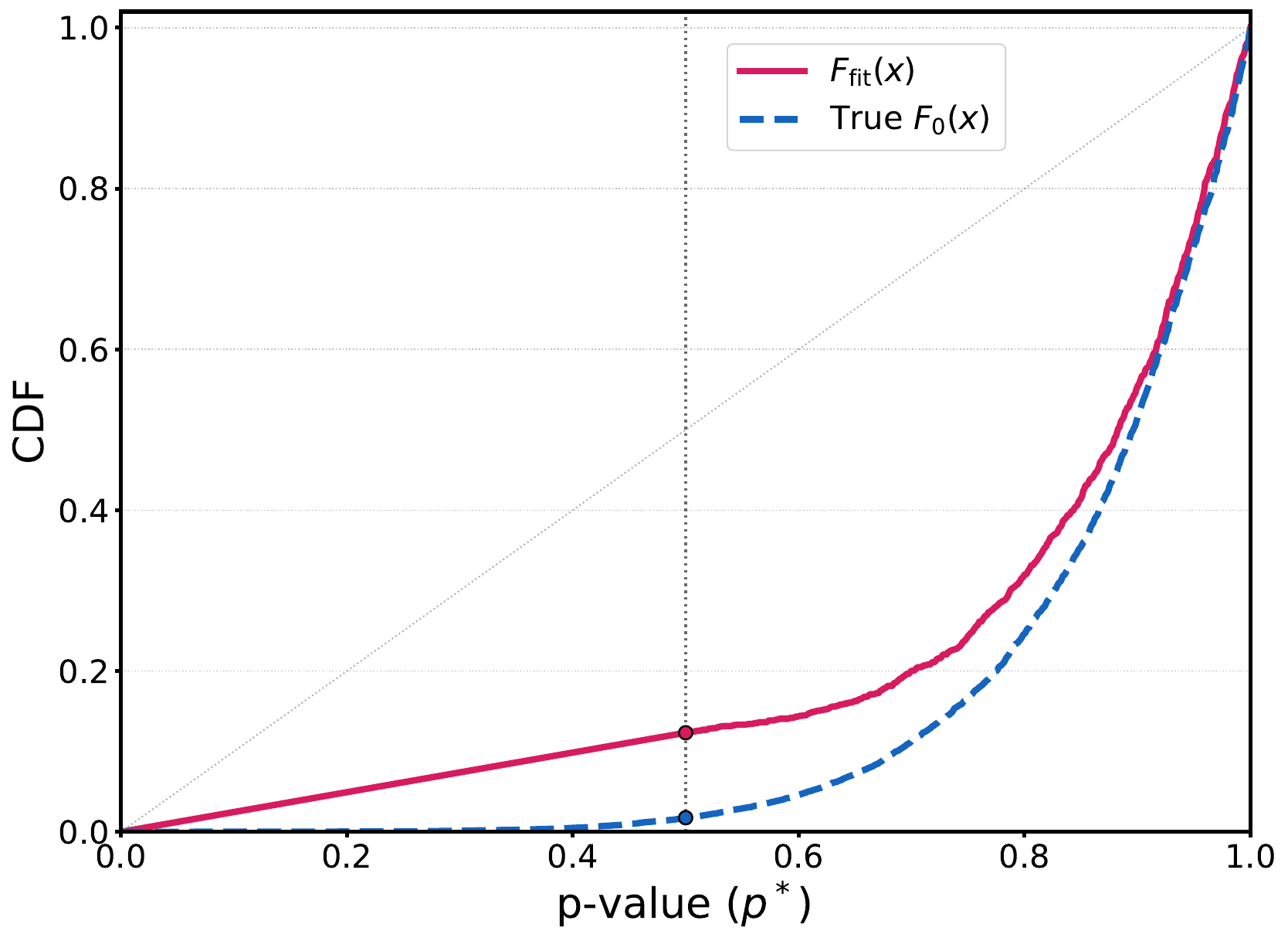}
\caption{Fitted envelope schematic. The fitted $\widehat F_{\mathrm{fit}}$ is a conservative estimate of the true $F_0$.}
\label{fig:synth-cdf-audit}
\vspace{-1.0em}
\end{wrapfigure}
The left branch is linear with slope $\widehat F_{\mathrm{fit}}(\lambda)/\lambda$, whereas the right branch uses the empirical right-tail CDF rescaled to match the anchor. \cref{fig:synth-cdf-audit} illustrates the role of this fitted envelope: it maps the raw max-$p$ values to a conservative CDF scale before BH-style selection. The threshold $\lambda$ marks the point beyond which the right tail is dominated by contaminated items; its data-driven choice and sensitivity are deferred to \cref{sec:experiments}. Each maximum-$p$ value is then transformed through the envelope as $\widetilde p_i=\widehat F_{\mathrm{fit}}(p_i^*)$.

To improve power, we use Storey's estimator \citep{storey2002direct} on the transformed values $\{\widetilde p_i\}$, using the null-proportion estimate
\begin{equation}
\label{eq:pi0}
    \widehat\pi_0
    =
    \frac{\frac{1}{n}\sum_{i=1}^n \mathds{1}\{p_i^*>\lambda\}}
    {1-\widehat F_{\mathrm{fit}}(\lambda)},
\end{equation}
and, with $\widetilde p_{(1)}\le\cdots\le\widetilde p_{(n)}$ the sorted transformed values, returning
\begin{equation}
\label{eq:final-set}
    \cS=\!\left\{i:\widetilde p_i\le\frac{\alpha\,r^{*}}{\widehat\pi_0\,n}\right\},
    \text{ where }
    r^{*}=\max\!\left\{r:\widetilde p_{(r)}\le\frac{\alpha\,r}{\widehat\pi_0\,n}\right\},
\end{equation}
with $r^{*}=0$ and $\cS=\emptyset$ when the set is empty. Relative to \baseline{}, \abbr{} replaces $p_i^*$ with $\widetilde p_i$ to mitigate the conservatism induced by super-uniformity, and replaces the fixed choice $\widehat\pi_0=1$ with the data-adaptive estimate in \cref{eq:pi0}. To establish the theoretical guarantee, we first state the key property motivating the construction of $\widehat F_{\mathrm{fit}}(p_i^*)$:

\begin{proposition}[Envelope domination]
\label{prop:envelope}
Let $F_{\mathrm{fit}}$ be the population counterpart of \Cref{eq:fit-envelope}. Then $F_{\mathrm{fit}}(x)\ge F_0(x)$ for all $x\in(0,1]$.
\end{proposition}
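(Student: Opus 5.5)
The plan is to make the population objects explicit and then reduce domination to a single inequality at the anchor $\lambda$, which convexity of $F_0$ supplies.

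\textbf{Population envelope and assumptions.} I read the population counterpart of \cref{eq:fit-envelope} as the envelope built from the right-tail conditional CDF $G$ of \cref{eq:right-tail} and its boundary density $g(\lambda^+)$, computed under the null law $F_0$. This rests on the standing premise that the right tail $\{p_i^*>\lambda\}$ is populated by contaminated items. If the paper instead intends the mixture $F_{\mathrm{mix}}$, I would add an explicit assumption that $F_1$ puts no mass on $(\lambda,1]$, so that the tail conditional law is exactly that of $F_0$. The one structural property of $F_0$ I need is that it is convex on $[0,1]$ with $F_0(0)=0$, i.e.\ the null density of $p_i^*$ is nondecreasing. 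This is the ``monotone increasing, Beta-like'' shape invoked before \cref{alg:hatg}, and it holds exactly in the stylized case $F_0(t)=t^K$.

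Under these readings,
\[
g(\lambda^+)=\frac{f_0(\lambda^+)}{1-F_0(\lambda)},
\qquad
a:=F_{\mathrm{fit}}(\lambda)=\frac{\lambda f_0(\lambda^+)}{1-F_0(\lambda)+\lambda f_0(\lambda^+)} .
\]

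\textbf{Key step: domination at the anchor.} I would first show $a\ge F_0(\lambda)$. Clearing the positive denominator, this is equivalent to
\[
\lambda f_0(\lambda^+)\bigl(1-F_0(\lambda)\bigr)\ge F_0(\lambda)\bigl(1-F_0(\lambda)\bigr),
\]
that is, $\lambda f_0(\lambda^+)\ge F_0(\lambda)$ whenever $F_0(\lambda)<1$. This follows from convexity. The supporting line of $F_0$ at $\lambda$, with slope $f_0(\lambda^+)$ (the right derivative), lies below $F_0$. Evaluating it at $0$ gives
\[
0=F_0(0)\ge F_0(\lambda)-\lambda f_0(\lambda^+).
\]

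\textbf{Extending to both branches.}
\begin{itemize}
\item On $(0,\lambda]$: convexity places $F_0$ below its chord from $(0,0)$ to $(\lambda,F_0(\lambda))$. Hence $F_0(x)\le x\,F_0(\lambda)/\lambda\le x\,a/\lambda=F_{\mathrm{fit}}(x)$.
\item On $(\lambda,1]$: by definition of $G$, $F_0(x)=F_0(\lambda)+(1-F_0(\lambda))G(x)$. Subtracting this from $F_{\mathrm{fit}}(x)=a+(1-a)G(x)$ gives
\[
F_{\mathrm{fit}}(x)-F_0(x)=\bigl(a-F_0(\lambda)\bigr)\bigl(1-G(x)\bigr)\ge 0.
\]
\end{itemize}
Together these give $F_{\mathrm{fit}}\ge F_0$ on all of $(0,1]$.

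\textbf{Main obstacle.} The algebra above is routine. The real difficulty is pinning down what ``population counterpart'' means and which hypotheses the statement silently uses. Without convexity of $F_0$, the anchor inequality $\lambda f_0(\lambda^+)\ge F_0(\lambda)$ can fail. If pure items contribute to the tail, $g(\lambda^+)$ is a mixture quantity, and I would then need an extra comparison, such as $F_1$ having a decreasing density so that its tail conditional density at $\lambda^+$ is no larger than that of $F_0$, to keep the anchor conservative. My plan is to state convexity of $F_0$ and null-dominance of the tail as explicit assumptions, and to prove the clean case above.
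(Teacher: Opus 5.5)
Your anchor step via the supporting line of the convex $F_0$, the chord bound on $(0,\lambda]$, and the right-branch algebra match the paper's. The problem is that you establish them only for an envelope built from the \emph{null} tail ($G=G_0$, $g(\lambda^+)=g_0(\lambda^+)$), and that is not the object in the statement. The population counterpart of \cref{eq:fit-envelope} is the limit of $\widehat G_n$ and $\widehat g(\lambda^+)$. Both are computed from all observed maxima above $\lambda$, so the limits are the mixture quantities $G_{\mathrm{mix}}$ and $g_{\mathrm{mix}}(\lambda^+)$. The paper only says the tail is \emph{enriched} for contaminated items, not that pure items are absent from it. With $G=G_{\mathrm{mix}}$, your identity $F_{\mathrm{fit}}(x)-F_0(x)=(a-F_0(\lambda))(1-G(x))$ is no longer available. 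The anchor is then driven by $g_{\mathrm{mix}}(\lambda^+)$, so convexity of $F_0$ alone does not suffice. Concretely, for the uniform null $F_0(x)=x$ your null anchor equals $\lambda=F_0(\lambda)$ exactly. Any configuration with $g_{\mathrm{mix}}(\lambda^+)<g_0(\lambda^+)$, i.e.\ pure tail items piling up near $1$ rather than near $\lambda$, therefore gives $F_{\mathrm{fit}}(\lambda)<F_0(\lambda)$. Assuming that $F_1$ puts no mass on $(\lambda,1]$ sidesteps this, but it proves a much narrower result.

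The missing ingredient is the paper's right-tail distinguishability assumption, $G_1(x)\ge G_0(x)$ on $[\lambda,1]$. Write $G_{\mathrm{mix}}=wG_0+(1-w)G_1$ with $w=\pi_0(1-F_0(\lambda))/(1-F_{\mathrm{mix}}(\lambda))$.
\begin{itemize}
\item The assumption yields $G_{\mathrm{mix}}\ge G_0$.
\item Since $G_0(\lambda)=G_1(\lambda)=0$, it also yields $g_1(\lambda^+)\ge g_0(\lambda^+)$, hence $g_{\mathrm{mix}}(\lambda^+)\ge g_0(\lambda^+)$.
\item Monotonicity of $y\mapsto\lambda y/(1+\lambda y)$ then reduces the anchor to exactly your null computation, and the left branch is unchanged.
\item On the right branch, replace your identity by monotonicity of $(a,b)\mapsto a+(1-a)b$ in both arguments, applied with $F_{\mathrm{fit}}(\lambda)\ge F_0(\lambda)$ and $G_{\mathrm{mix}}(x)\ge G_0(x)$.
\end{itemize}

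Note also that the extra comparison you propose for the anchor is stated backwards. $F_{\mathrm{fit}}(\lambda)$ is increasing in $g(\lambda^+)$, so conservativeness needs the pure items' tail density at $\lambda^+$ to be \emph{no smaller} than the null's, not no larger. Your example hypothesis, a decreasing $f_1$, actually gives the right direction. Together with a nondecreasing $f_0$ it yields $g_1(\lambda^+)\ge 1/(1-\lambda)\ge g_0(\lambda^+)$, and even $G_1\ge G_0$. It is the consequence you stated that is reversed, and you did not address the right branch under the mixture at all.
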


\Cref{prop:envelope} shows that $F_{\mathrm{fit}}$ dominates $F_0$ pointwise. Consequently, the transformed values $\widetilde p_i$ remain super-uniform under the joint null, and the estimator $\widehat\pi_0$ in \cref{eq:pi0} is  conservative for the true null proportion. Combined with the standard Storey-BH argument, these two facts yield asymptotic \metricabbr{} control, as formalized below:

\begin{theorem}[Asymptotic \metricabbr{} control]
\label{thm:fir}
The selected set returned by \abbr{} satisfies
\begin{equation}
    \limsup_{n\to\infty}\,\metricop(\cS)\le\alpha.
\end{equation}
\end{theorem}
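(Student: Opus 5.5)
The plan is to reduce \abbr{} to an oracle Storey--BH procedure that uses the population envelope $F_{\mathrm{fit}}$ and the population null-proportion bound. I will then show that the data-driven quantities converge to these population limits fast enough that the realized \fdpop{} converges in $L^1$ to an oracle quantity bounded by $\alpha$.

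\textbf{Step 1 (oracle limits).} By a Glivenko--Cantelli argument applied to the $p_i^*$ (weakly dependent through the shared calibration set, with the dependence vanishing as $m,n\to\infty$), $\widehat G_n\to G$ uniformly on $[\lambda,1]$. Consistency of the $k$-NN estimator gives $\widehat g(\lambda^+)\to g(\lambda^+)$. Hence $\widehat F_{\mathrm{fit}}\to F_{\mathrm{fit}}$ uniformly on $(0,1]$, because both branches of \cref{eq:fit-envelope} are continuous functions of $(\widehat F_{\mathrm{fit}}(\lambda),\widehat G_n)$. Likewise $\widehat\pi_0\to\pi_0^\infty:=(1-F_{\mathrm{mix}}(\lambda))/(1-F_{\mathrm{fit}}(\lambda))$ almost surely.

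\textbf{Step 2 (conservativeness).} Using \cref{prop:envelope}, $F_{\mathrm{fit}}(\lambda)\ge F_0(\lambda)$. Therefore
\begin{equation*}
1-F_{\mathrm{mix}}(\lambda)\ge \pi_0\bigl(1-F_0(\lambda)\bigr)\ge\pi_0\bigl(1-F_{\mathrm{fit}}(\lambda)\bigr),
\end{equation*}
so $\pi_0^\infty\ge\pi_0$. Also, under the joint null, $\Pp(F_{\mathrm{fit}}(p_i^*)\le t)\le\Pp(F_0(p_i^*)\le t)\le t$, again by \cref{prop:envelope}, since $F_{\mathrm{fit}}\ge F_0$ and $F_0$ is a CDF.

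\textbf{Step 3 (Storey--Tusher asymptotics).} Define the step-up threshold as a functional:
\begin{equation*}
\widehat t=\sup\{t:\ \widehat\pi_0\, t\le \alpha\,\widehat H_n(t)\},\qquad \widehat H_n(t)=\tfrac1n\#\{i:\widetilde p_i\le t\}.
\end{equation*}
Then $\cS=\{i:\widetilde p_i\le\widehat t\}$. The limiting threshold is $t^\infty=\sup\{t:\pi_0^\infty t\le\alpha H(t)\}$, where $H(t)=F_{\mathrm{mix}}(F_{\mathrm{fit}}^{-1}(t))$. I assume the usual regularity condition that the crossing is transversal, with $t^\infty>0$. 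Under that condition, uniform convergence of $\widehat H_n$ and $\widehat\pi_0$ gives $\widehat t\to t^\infty$ in probability.

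The false-selection count divided by $n$ then converges to $\pi_0 H_0(t^\infty)$, where $H_0(t)=\Pp(\widetilde p\le t\mid H_0)\le t$. The selection count divided by $n$ converges to $H(t^\infty)=\pi_0^\infty t^\infty/\alpha$. Hence
\begin{equation*}
\fdpop(\cS)\to \frac{\alpha\,\pi_0 H_0(t^\infty)}{\pi_0^\infty t^\infty}\le\alpha
\end{equation*}
in probability, by Step 2. Since $\fdpop\in[0,1]$, bounded convergence yields $\limsup\metricop(\cS)\le\alpha$. In the degenerate case $t^\infty=0$, I would argue that the number of selections is $o_p(n)$ and treat it separately, or assume it away.

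\textbf{Main obstacle.} The hardest step is the uniform convergence of the plug-in envelope, in particular of the anchor, jointly with the dependence among the $p_i^*$ induced by the shared calibration scores. I would handle this by conditioning on the calibration set. Given the calibration set, the candidate $p_i^*$ are i.i.d. from a conditional mixture whose CDFs converge to $F_0,F_1$ at rate $O(m^{-1/2})$ by the DKW inequality. Continuity of $F_0$ near $t^\infty$ is needed to transfer these limits through the transformation $F_{\mathrm{fit}}$.
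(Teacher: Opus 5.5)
Your proposal is correct and follows essentially the same route as the paper. In both arguments, uniform consistency of $\widehat G_n$ and the kNN anchor gives $\widehat F_{\mathrm{fit}}\to F_{\mathrm{fit}}$; \cref{prop:envelope} then yields null super-uniformity of $F_{\mathrm{fit}}(p_i^*)$ and $\pi_0^\infty=(1-F_{\mathrm{mix}}(\lambda))/(1-F_{\mathrm{fit}}(\lambda))\ge\pi_0$; and Storey--BH asymptotics close the proof. The only difference is that you unpack the Storey--Taylor--Siegmund threshold argument that the paper cites as \cref{lem:storey}, and you make explicit the non-degeneracy condition $t^\infty>0$ and the $m\to\infty$ conditioning on the shared calibration set, whereas the paper absorbs both into its i.i.d.\ idealization of the $p_i^*$ and its statement of \cref{lem:storey}.
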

The proofs of \Cref{prop:envelope,thm:fir} are deferred to \Cref{proof:envelope,proof:fdr}. We next empirically validate \abbr{}'s \metricabbr{} control.
% \Cref{thm:fir} provides asymptotic \metricabbr{} control through a two-step preservation of super-uniformity: envelope domination from \cref{prop:envelope} and a conservative null-proportion estimator. 

% It does not provide finite-sample exact control, which \baseline{} retains via \cref{lem:max-valid}, but at the cost of substantial conservatism that \abbr{} is designed to reduce. 

\section{Experimental results}
\label{sec:experiments}

In this section, we empirically evaluate \abbr{} on training-data-detection benchmarks with a controlled member/non-member structure, and verify its joint \metricabbr{} control as well as its power against the \baseline{} baseline of \Cref{sec:scaffold}. Throughout, \abbr{} is instantiated with the data-driven right-tail threshold rule and the Storey null-proportion estimator of \Cref{eq:pi0}, which we adopt as the default configuration.

\subsection{Setup}
\label{sec:setup}

\paragraph{Datasets and audited models.}
The main evaluation uses WikiMIA~\citep{shi2024detecting} and ArXivTection~\citep{duarte2024decop} with $K=16$ independently fine-tuned models initialized from GPT-NeoX-20B~\citep{black-etal-2022-gpt}, Pythia-6.9B~\citep{biderman2023pythia}, and LLaMA-7B~\citep{touvron2023llama}; appendix experiments additionally cover five MIMIR~\citep{duan2024membership} subsets (HackerNews, DMMath, GitHub, Pile-CC, PubMed) in \Cref{app:mimir} and a mixed-family audit pool spanning Pythia ($1.4$B/$2.8$B/$6.9$B/$12$B), GPT-NeoX-20B, and LLaMA ($7$B/$13$B/$30$B) in \Cref{app:mixed-family}. The contamination-controlled three-block split, the resulting per-instance membership labels $M_i^k$, the contamination fraction $\rho$, and per-dataset sample counts are deferred to \Cref{app:data-split} and \Cref{tab:dataset-stats}.

\paragraph{Detection scores and baseline.}
We employ the per-model score $T(x;\theta_k)$ with four standard detectors: Perplexity~\citep{carlini2021extracting}, Min-K\%~\citep{shi2024detecting}, Min-K\%++~\citep{zhang2025mink}, and Modified Entropy~\citep{song2021systematic}.

% We compare against \baseline{}, which applies BH to the raw $p_i^*$ (\Cref{sec:scaffold}); the comparison isolates the value of the right-tail envelope reconstruction in \Cref{sec:jcs-method}.

% \paragraph{Metrics.}
% \fdpabbr{} is the realized selected-item contamination proportion of \Cref{eq:scp}; \metricabbr{} is its expectation; Power is the recovery rate of jointly pure items as defined in \Cref{eq:scp}. We report \abbr{} averaged over $500$ random splits and \baseline{} over $1000$ splits, with standard errors $\mathrm{std}/\sqrt{\text{exp\_num}}$. The default $\alpha$ grid is $\{0.1, 0.2, 0.3, 0.4, 0.5\}$ for figures and $\{0.1, 0.2, 0.3\}$ for tables.

\subsection{Main results}
\label{sec:main-control-power}

\begin{figure}[t]
    \centering
    \includegraphics[width=\textwidth]{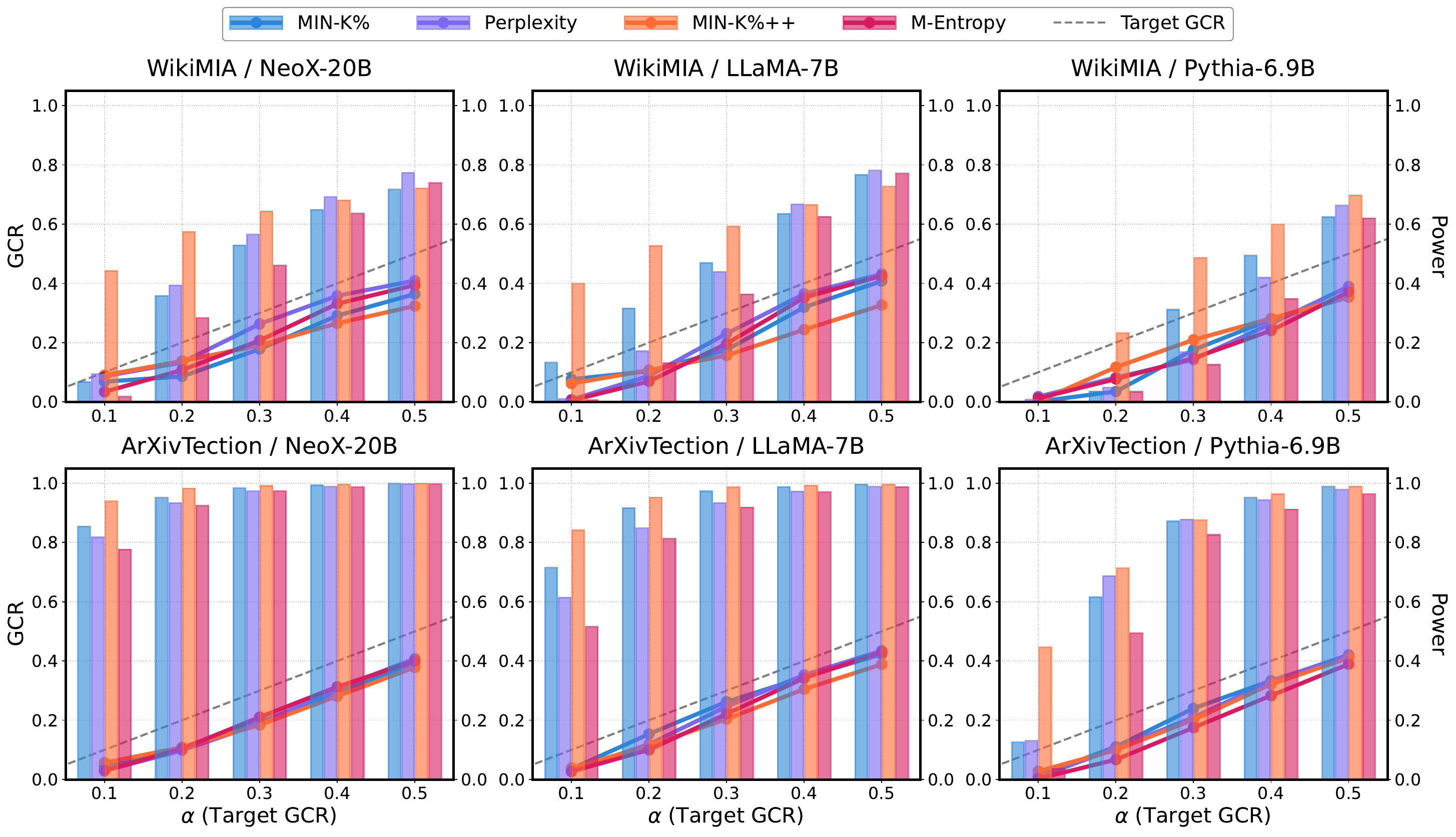}
    \caption{Joint \metricabbr{} control with \abbr{} at $K=16$. Each panel corresponds to a dataset--model family pair and reports realized \fdpabbr{} curves on the left axis and Power bars on the right axis for four detection scores under \abbr{}; the dashed diagonal is the target \metricabbr{}.}
    \label{fig:main-control}
\end{figure}

\paragraph{\abbr{} controls \metricabbr{} across detection scores and model families.}
We next evaluate \abbr{} on the two real benchmarks across all three model families and four detection scores. \Cref{fig:main-control} reports realized \fdpabbr{} curves (left axis) and Power bars (right axis) over the various levels $\alpha$ at $K=16$. Throughout these evaluations, the realized \fdpabbr{} stays at or below the target diagonal. The procedure thus delivers controlled joint pure subsets at the user-specified level without configuration-specific tuning, and the control is stable across detection scores. The same control pattern is verified on five additional MIMIR subsets in \Cref{app:mimir}, and on a heterogeneous mixed-family audit pool of Pythia, GPT-NeoX, and LLaMA variants in \Cref{app:mixed-family}.

\paragraph{\abbr{} attains higher Power than \baseline{}.}
\Cref{tab:power-arxiv} reports the Power comparison between \abbr{} and \baseline{} on ArXivTection at $K=16$. \abbr{} consistently achieves higher Power than \baseline{} across target levels, model families, and detection scores, with the largest gains at smaller $\alpha$. For example, under LLaMA-7B at $\alpha=0.1$ using Min-K\%, \abbr{} improves Power from $0.055$ to $0.715$ over \baseline{}. The corresponding results on WikiMIA  are provided in \Cref{app:power-wikimia}.

\begin{table}[t]
\centering
\caption{Power on ArXivTection at $K=16$. Reported values are average $\mathrm{Power}$ with standard error. ``Base'' refers to \baseline{} and ``Ours'' refers to \abbr{}. \textbf{Bold} numbers mark the better result in each pair.}
\label{tab:power-arxiv}
\vspace{4pt}
\renewcommand\arraystretch{1.2}
\resizebox{\textwidth}{!}{
\begin{tabular}{ll cc cc cc}
\toprule
\multirow{2}{*}{\textbf{Model}} & \multirow{2}{*}{\textbf{Score}} & \multicolumn{2}{c}{$\alpha=0.1$} & \multicolumn{2}{c}{$\alpha=0.2$} & \multicolumn{2}{c}{$\alpha=0.3$} \\
\cmidrule(lr){3-4} \cmidrule(lr){5-6} \cmidrule(lr){7-8}
& & Base & Ours & Base & Ours & Base & Ours \\
\midrule
\multirow{4}{*}{NeoX-20B} & Perplexity & 0.403{\scriptsize$\pm$0.008} & \textbf{0.817{\scriptsize$\pm$0.002}} & 0.829{\scriptsize$\pm$0.001} & \textbf{0.934{\scriptsize$\pm$0.001}} & 0.909{\scriptsize$\pm$0.000} & \textbf{0.973{\scriptsize$\pm$0.000}} \\
& Min-K\%++ & 0.872{\scriptsize$\pm$0.001} & \textbf{0.939{\scriptsize$\pm$0.001}} & 0.944{\scriptsize$\pm$0.000} & \textbf{0.983{\scriptsize$\pm$0.000}} & 0.972{\scriptsize$\pm$0.000} & \textbf{0.991{\scriptsize$\pm$0.000}} \\
& Min-K\% & 0.576{\scriptsize$\pm$0.003} & \textbf{0.854{\scriptsize$\pm$0.002}} & 0.871{\scriptsize$\pm$0.001} & \textbf{0.951{\scriptsize$\pm$0.001}} & 0.926{\scriptsize$\pm$0.000} & \textbf{0.983{\scriptsize$\pm$0.000}} \\
& M-Entropy & 0.235{\scriptsize$\pm$0.008} & \textbf{0.776{\scriptsize$\pm$0.002}} & 0.783{\scriptsize$\pm$0.001} & \textbf{0.924{\scriptsize$\pm$0.001}} & 0.886{\scriptsize$\pm$0.001} & \textbf{0.973{\scriptsize$\pm$0.001}} \\
\midrule
\multirow{4}{*}{LLaMA-7B} & Perplexity & 0.040{\scriptsize$\pm$0.004} & \textbf{0.614{\scriptsize$\pm$0.005}} & 0.579{\scriptsize$\pm$0.002} & \textbf{0.848{\scriptsize$\pm$0.002}} & 0.755{\scriptsize$\pm$0.001} & \textbf{0.934{\scriptsize$\pm$0.002}} \\
& Min-K\%++ & 0.715{\scriptsize$\pm$0.001} & \textbf{0.841{\scriptsize$\pm$0.002}} & 0.844{\scriptsize$\pm$0.001} & \textbf{0.952{\scriptsize$\pm$0.001}} & 0.918{\scriptsize$\pm$0.000} & \textbf{0.986{\scriptsize$\pm$0.000}} \\
& Min-K\% & 0.055{\scriptsize$\pm$0.004} & \textbf{0.715{\scriptsize$\pm$0.003}} & 0.680{\scriptsize$\pm$0.001} & \textbf{0.917{\scriptsize$\pm$0.002}} & 0.793{\scriptsize$\pm$0.001} & \textbf{0.973{\scriptsize$\pm$0.001}} \\
& M-Entropy & 0.026{\scriptsize$\pm$0.003} & \textbf{0.516{\scriptsize$\pm$0.005}} & 0.523{\scriptsize$\pm$0.002} & \textbf{0.813{\scriptsize$\pm$0.002}} & 0.716{\scriptsize$\pm$0.001} & \textbf{0.918{\scriptsize$\pm$0.001}} \\
\midrule
\multirow{4}{*}{Pythia-6.9B} & Perplexity & 0.007{\scriptsize$\pm$0.001} & \textbf{0.130{\scriptsize$\pm$0.006}} & 0.149{\scriptsize$\pm$0.004} & \textbf{0.687{\scriptsize$\pm$0.006}} & 0.464{\scriptsize$\pm$0.004} & \textbf{0.878{\scriptsize$\pm$0.002}} \\
& Min-K\%++ & 0.094{\scriptsize$\pm$0.004} & \textbf{0.447{\scriptsize$\pm$0.002}} & 0.465{\scriptsize$\pm$0.001} & \textbf{0.713{\scriptsize$\pm$0.002}} & 0.627{\scriptsize$\pm$0.001} & \textbf{0.876{\scriptsize$\pm$0.002}} \\
& Min-K\% & 0.006{\scriptsize$\pm$0.001} & \textbf{0.126{\scriptsize$\pm$0.007}} & 0.139{\scriptsize$\pm$0.005} & \textbf{0.616{\scriptsize$\pm$0.005}} & 0.460{\scriptsize$\pm$0.002} & \textbf{0.871{\scriptsize$\pm$0.002}} \\
& M-Entropy & 0.004{\scriptsize$\pm$0.001} & \textbf{0.037{\scriptsize$\pm$0.004}} & 0.095{\scriptsize$\pm$0.004} & \textbf{0.494{\scriptsize$\pm$0.007}} & 0.370{\scriptsize$\pm$0.003} & \textbf{0.826{\scriptsize$\pm$0.002}} \\
\bottomrule
\end{tabular}
}
\end{table}

\subsection{Sensitivity analysis}
\label{sec:sensitivity}

% We next examine how \abbr{} responds to its operating choices: the right-tail threshold $\lambda$, the number of audited models $K$, and the training fraction $\rho$ defined in \Cref{sec:setup}. The headline is that the procedure does not need configuration-specific tuning: the data-driven $\lambda$ rule matches the best fixed $\lambda$, the realized \fdpabbr{} stays under the target as $K$ grows, and the control persists across values of $\rho$.

\paragraph{Effect of the right-tail threshold $\lambda$}
\Cref{fig:lambda-sensitivity} sweeps the right-tail threshold $\lambda\in\{0.3,\ldots,0.9\}$ on WikiMIA and ArXivTection (NeoX-20B, $K=8$, Min-K\%++, $\alpha=0.1$) and overlays the data-driven rule. The adaptive choice tracks the best fixed $\lambda$ in power without producing any \fdpabbr{} violation. This is also the empirical check for the right-tail dominance condition behind \Cref{thm:fir}: even at the smallest $\lambda$ tested, \fdpabbr{} stays below $\alpha$, indicating that the right tail is contaminated-dominated across the operating range. The LLaMA-7B counterparts are reported in \Cref{app:extra-experiments}.

\begin{figure}[t]
    \centering
    \begin{minipage}[t]{0.49\textwidth}
        \centering
        \includegraphics[width=\textwidth]{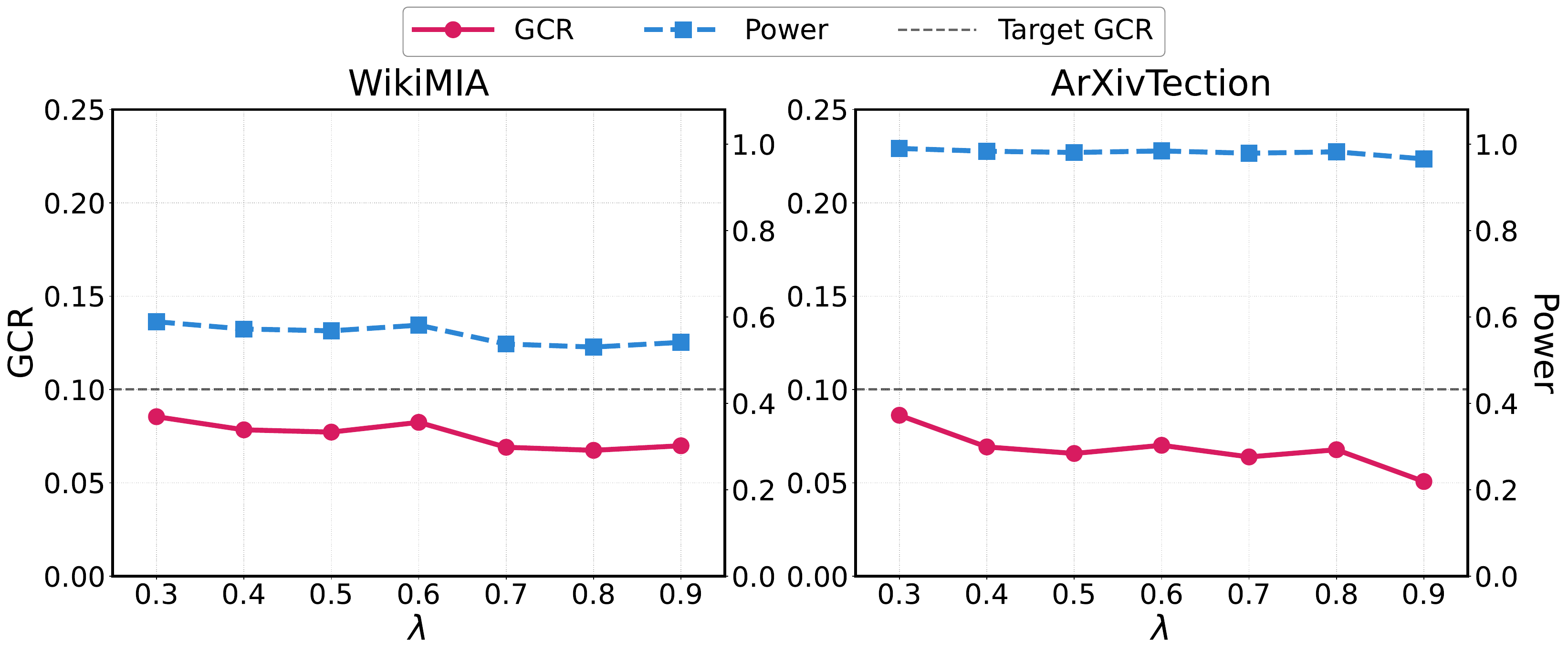}
        \caption{GCR and Power curves over $\lambda$ with $\alpha=0.1$. The dashed line is the target \metricabbr{}.}
        \label{fig:lambda-sensitivity}
    \end{minipage}
    \hfill
    \begin{minipage}[t]{0.49\textwidth}
        \centering
        \includegraphics[width=\textwidth]{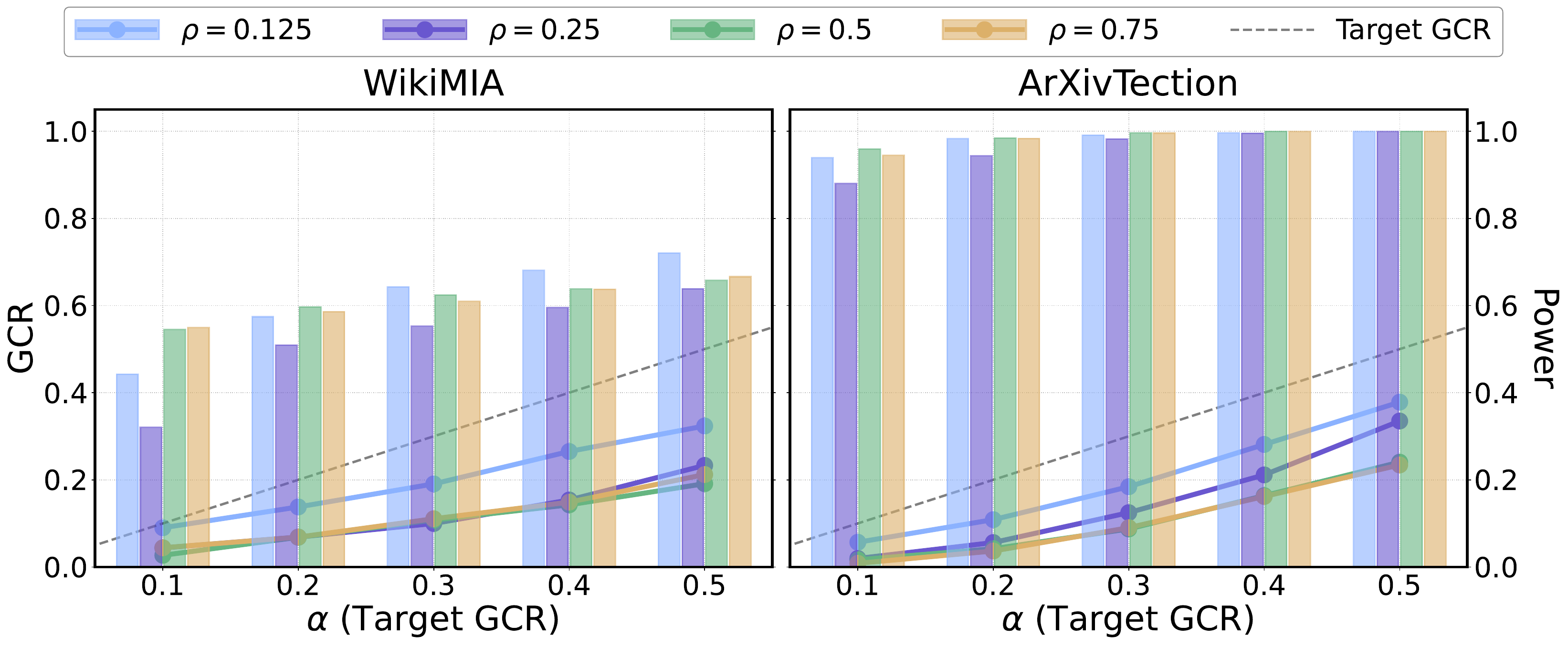}
        \caption{\fdpabbr{} curve (left axis) and Power bars (right axis) under varying training fraction $\rho$.}
        \label{fig:split-sensitivity}
    \end{minipage}
\end{figure}

\paragraph{Impact of the number of audited models $K$ on control and power.}
\Cref{fig:k-sensitivity} varies $K \in \{2, 4, 8, 16\}$ on WikiMIA across all three model families with Min-K\%++. \fdpabbr{} stays under the target diagonal as $K$ grows, while Power does not collapse with $K$. Our claim is not that every Power curve is strictly monotone in $K$, but that the procedure remains controlled as the number of jointly audited models grows; individual power trends can be non-monotone because, by \Cref{eq:joint-purity-prevalence}, the joint-purity prevalence in the test mixture depends on $K$ through the term $(1-\rho)^K$ from the remainder block while the positives in the globally pure block stay fixed.

\begin{figure}[t]
    \centering
    \includegraphics[width=\textwidth]{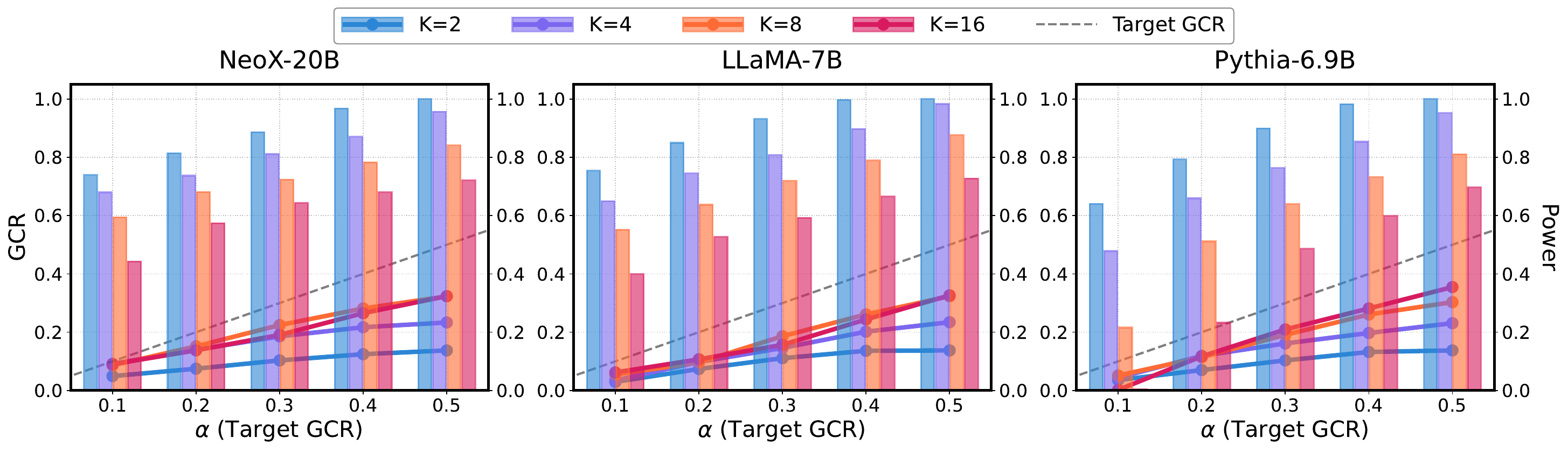}
    \caption{Sensitivity to the number of audited models $K$ on WikiMIA. \fdpabbr{} (lines, left axis) and Power (bars, right axis) with Min-K\%++.}
    \label{fig:k-sensitivity}
\end{figure}

\paragraph{Robustness of error control to variations in the training fraction $\rho$.}
The training fraction $\rho$ controls the contamination level in our experiments: it is the per-model sampling fraction within the remainder block of the controlled three-block split, as defined in \Cref{app:data-split}. \Cref{fig:split-sensitivity} sweeps $\rho \in \{0.125, 0.25, 0.5, 0.75\}$ on NeoX-20B at $K=16$ with Min-K\%++, on both WikiMIA and ArXivTection. Across all four values of $\rho$ and both datasets, the realized \fdpabbr{} stays at or below the target diagonal, confirming that \abbr{}'s control is robust to the contamination level.

\section{Related Work}
\label{sec:related}

\paragraph{Benchmark data contamination.}
Benchmark data contamination occurs when evaluation examples overlap with a model's training data, which can inflate reported performance~\citep{sainz2023nlp,balloccu2024leak,xu2024benchmark,li2024awesome, DBLP:conf/icml/BordtSBL25, cheng2025survey}. One line of work reduces this risk before evaluation through dynamic benchmark construction~\citep{zhu2024dyval, DBLP:conf/aaai/0001GL24,white2025livebench,srivastava2026beyondbench}, data rewriting~\citep{yang2023rethinking,zhu-etal-2024-clean}, or contamination-free benchmark curation~\citep{zhao2024mmlu}. Another line detects contamination after models and benchmarks already exist. Most methods in this setting operate at the sample level, inferring whether a particular example was used for training; such methods have been studied for LLMs~\citep{carlini2021extracting, shi2024detecting, zhang-etal-2024-pretraining, golchin2024time, li2024taskcontamination, zhang2025finetuning, hu2025a, zhang2025mink, mattern-etal-2023-membership, xie-etal-2024-recall, zhang-etal-2024-pretraining, raoof2025infilling, kaneko-etal-2025-sampling, yi2026membershipinferencellmswild}, vision language models~\citep{ko2023practical, li2024membership, DBLP:conf/uss/HuLLZQ0025, liu2026lomia}.
Dataset-level detection methods instead test for contamination at a broader dataset scale~\citep{DBLP:conf/emnlp/VuHHS23, oren2024proving,ko2023practical,li-etal-2024-open-source, maini2024llm, zhang-etal-2024-pacost,choi2025how, zawalski2026detecting}. Both lines of work target a single audited model or a single dataset and do not return a shared item-level decontaminated subset across multiple audited models. In contrast, \abbr{} aggregates per-model instance-level conformal $p$-values into a joint selection procedure that returns one benchmark shared by $K$ audited models with provable \metricabbr{} control.

% using disclosed corpus overlap or internal signals~\citep{touvron2023llama,achiam2023gpt}, likelihood, token-probability, or confidence statistics~\citep{carlini2021extracting,li2023estimating,shi2024detecting,zhang2025mink,zhang-etal-2024-pretraining}, and black-box completions, option preferences, or output-distribution tests~\citep{oren2023proving,duarte2024decop,dong-etal-2024-generalization}. LLM dataset-level detectors aggregate evidence across a benchmark or reference corpus to test whether one audited model was trained on the dataset, whether a benchmark differs from a clean reference, or whether benchmark performance is abnormally inflated~\citep{maini2024llm,shetty2026detecting,choi2025how,dekoninck2024constat}. For VLMs and multimodal document models, recent work adapts the same exposure question to image--text and document-VQA inputs, using token-level image membership signals, document-level membership attacks, or multimodal semantic perturbations~\citep{zhan2024mia,nguyen2025docmia,park2026vlm}. Across these model classes, existing methods diagnose contamination or membership for one model or one dataset, but do not construct a shared decontaminated item subset. In contrast, \abbr{} uses item-level contamination scores as inputs to a conformal selection procedure and returns one subset shared by $K$ audited models with controlled selected-item contamination rate.

\paragraph{Conformal selection.}
Selective inference with false discovery rate control is rooted in the Benjamini--Hochberg (BH) procedure on valid $p$-values~\citep{benjamini1995controlling,benjamini2001control,storey2004strong}. Conformal prediction provides a distribution-free route to such $p$-values under exchangeability~\citep{vovk2005algorithmic,papadopoulos2008inductive}, and recent work combines conformal calibration with BH-type selection to obtain finite-sample selection guarantees~\citep{bates2023testing,jin2023selection}. Subsequent extensions address more constrained settings, including covariate shift through weighted conformal scores~\citep{jin2025model}, multivariate responses through vector-valued conformity scores~\citep{bai2025multivariate}, and multiple response conditions through conditional conformity constructions~\citep{hao2026multicondition}. These ideas have also been applied to LLM alignment~\citep{gui2024conformal}, drug discovery~\citep{bai2025conformal}, and training-data identification~\citep{liu2026provable}.
The closest formal comparison is multivariate conformal selection~\citep{bai2025multivariate}, where the vector structure comes from multiple response coordinates within one prediction problem. In joint benchmark decontamination, the multiplicity comes from audited models: each item has $K$ membership statuses, and it is contaminated if any audited model trained on it. This yields a joint conformal selection problem under a union null: the procedure must aggregate dependent model-wise conformal evidence into one item-level ranking while controlling the expected fraction of union-null items among selected items.

% These works fix one audited model and aggregate over outcomes within it, whereas \abbr{} aggregates the audited-model axis itself, joining $K$ per-model conformal $p$-values into a max statistic with FDR control.

\section{Conclusion}
\label{sec-conclusion}

In this paper, we formalize joint benchmark decontamination as a joint selection problem, where the goal is to select one shared benchmark whose global contamination rate is controlled across all audited models. We propose \method{} (\abbr{}), which computes model-wise conformal $p$-values, aggregates them through a max-$p$ statistic, reconstructs a conservative envelope for the max-$p$ null distribution, and applies adaptive Storey-BH to the transformed values. Experiments on synthetic data and LLM benchmarks show that \abbr{} maintains empirical contamination control across datasets, model families, and detection scores, while improving power over \baseline{}. The method is agnostic to the choice of detection score and applies beyond LLM evaluation whenever shared-member calibration data are available.
\paragraph{Limitations.}
The current procedure relies on a calibration set whose distribution is close to the candidate pool, which is a standard condition for conformal calibration. This leaves room to study distribution shift more explicitly in future work. Extending \abbr{} with shift-aware conformal $p$-values could broaden its use when the available calibration data only approximately matches the benchmark candidate distribution.

\bibliographystyle{plainnat}
\bibliography{previous_paper_bib}

\newpage

\appendix

\clearpage

\section{Table of Notations}
\label{app:notations}
\begin{table}[H]
\centering
\caption{Summary of notations used in the paper.}
\label{tab:notations}
\renewcommand\arraystretch{1.15}
\setlength{\tabcolsep}{4pt}
\footnotesize
\begin{tabular}{p{0.28\linewidth}p{0.66\linewidth}}
\toprule
\textbf{Notation} & \textbf{Description} \\
\midrule
$\theta_k$ & The $k$-th audited language model, $k\in\{1,\ldots,K\}$. \\
$K$ & The number of audited models. \\
$\cD_{\mathrm{train}}$ & The private training corpus, written $\cD_{\mathrm{train}}(\theta_k)$ for audited model $\theta_k$. \\
$x_i$ & The $i$-th candidate benchmark item; the pool is $\{x_i\}_{i=1}^n$. \\
$\cD_{\mathrm{cal}}$ & The shared calibration set $\{x_\ell\}_{\ell=n+1}^{n+m}$ whose items are known members of every audited model. \\
$n,m$ & The numbers of candidate and calibration items. \\
$M(x;\theta_k),\,M_i^k$ & The binary membership indicator of $x$ (resp.\ candidate $i$) with respect to $\theta_k$, with $M_i^k=M(x_i;\theta_k)$. \\
$H_{0,i},H_{1,i}$ & The joint null and alternative hypotheses: contaminated for at least one audited model, and jointly pure for all audited models. \\
$T(x;\theta_k)$ & The detection score for item $x$ under model $\theta_k$; smaller values indicate stronger evidence of non-membership. \\
$p_i^k$ & The model-wise conformal $p$-value for candidate item $i$ and model $\theta_k$. \\
$p_i^*$ & The aggregated max-$p$ statistic used to test the joint null, $p_i^*=\max_k p_i^k$. \\
$\widetilde p_i$ & The envelope-transformed value $\widehat F_{\mathrm{fit}}(p_i^*)$. \\
$\cS$ & The selected shared benchmark. \\
$\cS_k$ & The model-wise selected set for audited model $\theta_k$. \\
$\fdpop(\cS)$ & The \fdpfull{} (\fdpabbr{}), i.e., the realized fraction of contaminated items in $\cS$. \\
$\metricop(\cS)$ & The \metricfull{}, i.e., $\E[\fdpop(\cS)]$. \\
$\alpha$ & The target \metricabbr{} level. \\
$\mathrm{Power}(\cS)$ & The expected fraction of jointly pure candidate items selected into $\cS$. \\
$F_0,F_1$ & The CDFs of $p_i^*$ conditional on $H_{0,i}$ (contaminated) and $H_{1,i}$ (jointly pure), respectively. \\
$F_{\mathrm{mix}}$ & The marginal CDF of $p_i^*$ over the candidate pool, $F_{\mathrm{mix}}=\pi_0F_0+(1-\pi_0)F_1$. \\
$f_0,f_1,f_{\mathrm{mix}}$ & The densities corresponding to $F_0,F_1,F_{\mathrm{mix}}$. \\
$\lambda$ & The right-tail threshold used to fit the envelope for \abbr{}. \\
$G_0,G_1,G_{\mathrm{mix}}$ & The right-tail conditional CDFs on $[\lambda,1]$ corresponding to $F_0,F_1,F_{\mathrm{mix}}$. \\
$g_0,g_1,g_{\mathrm{mix}}$ & The corresponding right-tail conditional densities. \\
$\widehat G_n$ & The empirical estimate of $G_{\mathrm{mix}}$ computed from $\{p_i^*:p_i^*>\lambda\}$. \\
$\widehat g(\lambda^+)$ & The estimated right-boundary density at $\lambda$. \\
$\widehat F_{\mathrm{fit}}$ & The fitted conservative envelope used to transform max-$p$ values. \\
$\pi_0$ & The population proportion of contaminated candidate items. \\
$\widehat\pi_0$ & The estimated proportion of contaminated candidate items. \\
$\cP,\cP_0,\cP_{\mathrm{cal}},\cP_{\mathrm{rem}}$ & The benchmark candidate pool, clean block, calibration block, and remainder block used in the experimental split. \\
$\rho$ & The per-model training fraction within the remainder block in the controlled experiments. \\
\bottomrule
\end{tabular}
\end{table}

\clearpage

\section{Theoretical analysis}
\label{app:proofs}

This appendix states the formal assumptions used and gives the proof. Let $\pi_0$ and $1-\pi_0$ denote the proportions of contaminated and jointly pure candidate items, respectively. Let $F_0,F_1,F_{\mathrm{mix}}$ and $f_0,f_1,f_{\mathrm{mix}}$ be the corresponding CDFs and densities of $p_i^*$.

\subsection{Assumptions and their justification}
\label{app:assumptions}

\begin{assumption}[Convex null max-$p$ CDF]
\label{asmp:convex}
The null CDF $F_0$ of $p_i^*$ is convex on $[0,1]$.
\end{assumption}

\paragraph{Justification.}
The convexity condition is natural under max aggregation with heterogeneous ensemble exposure. For a contaminated item, let
$\mathcal{K}_i^0=\{k:M_i^k=1\},\qquad c=|\mathcal{K}_i^0|\ge 1$
be the set and number of audited models whose training data contain item $i$. The remaining $K-c$ models treat the item as unseen and tend to produce small conformal $p$-values. Because \abbr{} aggregates by $p_i^*=\max_k p_i^k$, these small values are masked in the right tail. Thus, on the operating interval $[\lambda,1]$, the null CDF is governed by the maximum over the $c$ models whose training data contain item $i$.

Two benchmark dependence regimes illustrate the shape of this CDF. If the null $p$-values from the $c$ models whose training data contain item $i$ are independent and marginally uniform, then
\begin{equation}
    F_{0,c}(x)
    =
    \Pp\!\left(\max_{\ell\le c}p_\ell\le x\right)
    =
    x^c,\qquad x\in[\lambda,1],
\end{equation}
which is strictly convex for $c\ge2$. At the other extreme, if these $c$ null $p$-values are perfectly positively dependent, they move together and the maximum has CDF
\begin{equation}
    F_{0,c}(x)=x,
\end{equation}
which is linear and hence still convex. This case also satisfies the derivative condition used in the proof as an equality.

A positive-dependence model interpolates between these two extremes. Let $C_{\mathrm{co},c}$ denote the comonotonic copula, which represents perfect positive dependence, and let $C_{\mathrm{ind},c}$ denote the independence copula. Under perfect positive dependence, the $c$ marginally uniform null $p$-values can be represented by a shared latent variable $Z\sim\mathrm{Unif}(0,1)$, namely $U_1=\cdots=U_c=Z$. Hence
\begin{equation}
    \Pp(U_1\le u_1,\ldots,U_c\le u_c)
    =
    \Pp\!\left(Z\le \min_{\ell\le c}u_\ell\right)
    =
    \min_{\ell\le c}u_\ell .
\end{equation}
Therefore
\begin{equation}
    C_{\mathrm{co},c}(u_1,\ldots,u_c)=\min_{\ell\le c}u_\ell,
    \qquad
    C_{\mathrm{ind},c}(u_1,\ldots,u_c)=\prod_{\ell=1}^c u_\ell .
\end{equation}
For $\eta\in[0,1]$, consider the convex combination
\begin{equation}
    C_{\eta,c}
    =
    \eta C_{\mathrm{co},c}
    +(1-\eta)C_{\mathrm{ind},c}.
\end{equation}
Evaluating this copula on the diagonal gives
\begin{equation}
    F_{0,c}(x)
    =
    C_{\eta,c}(x,\ldots,x)
    =
    \eta x+(1-\eta)x^c .
\end{equation}
Therefore
\begin{equation}
    F_{0,c}''(x)=(1-\eta)c(c-1)x^{c-2}\ge0
\end{equation}
for $c\ge2$, while the cases $c=1$ or $\eta=1$ reduce to the linear CDF $F_{0,c}(x)=x$. If the exposure frequency varies across contaminated items, the overall null CDF is a mixture
\begin{equation}
    F_0(x)=\sum_{c=1}^K w_c F_{0,c}(x),
    \qquad w_c\ge0,\quad \sum_{c=1}^K w_c=1,
\end{equation}
and convexity is preserved under this mixture. This provides a structural explanation for \cref{asmp:convex}: the max operator removes the influence of models whose training data do not contain the item in the right tail, and the dependence among models whose training data contain the item ranges from independence, which gives $x^c$, to perfect positive dependence, which gives $x$.

\begin{assumption}[Distinguishability of pure and contaminated items]
\label{asmp:tail}
On the truncated interval $[\lambda,1]$, the right-tail conditional CDF of pure items is at least that of contaminated items:
\begin{equation}
    G_1(x)\ge G_0(x),\qquad x\in[\lambda,1].
\end{equation}
\end{assumption}

\paragraph{Justification.}
The inequality is a distinguishability statement. Recalling that $G_0,G_1$ are right-tail conditional CDFs (\cref{eq:right-tail}), the assumption $G_1(x)\ge G_0(x)$ on $[\lambda,1]$ is equivalent to
\begin{equation*}
    \Pr\!\big(p_i^*>x\,\big|\,p_i^*>\lambda,\ H_{1,i}\text{ true}\big)
    \;\le\;
    \Pr\!\big(p_i^*>x\,\big|\,p_i^*>\lambda,\ H_{0,i}\text{ true}\big),
    \qquad x\in[\lambda,1],
\end{equation*}
i.e., conditional on having crossed $\lambda$, a pure item is less likely than a contaminated item to escape further into the upper tail at every level $x$. Equivalently, the conditional law of $p_i^*$ given $p_i^*>\lambda$ is stochastically smaller for pure items than for contaminated items, so the two populations remain distinguishable even after both have entered the right tail.

This assumption formalizes a one-sided separation within the right tail. Conditional on having crossed $\lambda$, pure items are expected to produce shallower exceedances, whereas contaminated items are expected to have a heavier upper tail. In the joint setting, this behavior is natural because an item contaminated for any audited model can receive a large model-wise $p_i^k$, while a jointly pure item must appear non-member-like across all models. Thus, among items with $p_i^*>\lambda$, pure items concentrate closer to $\lambda$, whereas contaminated items are more likely to extend toward $1$.

\begin{proposition}
\label{prop:knn-consistency}
Suppose that the candidate maxima $p_i^*$ are i.i.d. from $F_{\mathrm{mix}}$. The threshold $\lambda\in(0,1)$ is fixed, $1-F_{\mathrm{mix}}(\lambda)>0$, and the conditional right-tail distribution
\begin{equation}
    G_{\mathrm{mix}}(x)=\frac{F_{\mathrm{mix}}(x)-F_{\mathrm{mix}}(\lambda)}
    {1-F_{\mathrm{mix}}(\lambda)},\qquad x\in[\lambda,1],
\end{equation}
has a positive right-boundary density $g_{\mathrm{mix}}(\lambda^+)$ that is continuous in a right neighborhood of $\lambda$. Let
$m_R=\sum_{i=1}^n\ind\{p_i^*>\lambda\}$. The kNN neighborhood size $k_n$ satisfies
\begin{equation}
    k_n\to\infty,\qquad \frac{k_n}{m_R}\xrightarrow{p}0.
\end{equation}
Let $Y_{(1)}\le\cdots\le Y_{(m_R)}$ be the order statistics of the right-tail sample $\{p_i^*:p_i^*>\lambda\}$. Define
\begin{equation}
    \widehat g(\lambda^+)=\frac{k_n}{m_R\,(Y_{(k_n)}-\lambda)},
    \qquad
    \widehat G_n(x)=\frac{1}{m_R}\sum_{i=1}^n
    \ind\{\lambda<p_i^*\le x\},\quad x\in[\lambda,1].
\end{equation}
Then
\begin{equation}
    \widehat g(\lambda^+)\xrightarrow{p}g_{\mathrm{mix}}(\lambda^+),
    \qquad
    \sup_{x\in[\lambda,1]}|\widehat G_n(x)-G_{\mathrm{mix}}(x)|\xrightarrow{p}0.
\end{equation}
\end{proposition}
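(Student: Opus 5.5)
The plan is to condition on the random right-tail count $m_R$ and reduce both claims to classical facts about i.i.d.\ samples from the conditional law $G_{\mathrm{mix}}$.

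\textbf{Step 1: reduction to an i.i.d.\ right-tail sample.} Since the $p_i^*$ are i.i.d.\ from $F_{\mathrm{mix}}$, we have $m_R\sim\mathrm{Bin}(n,1-F_{\mathrm{mix}}(\lambda))$. By the law of large numbers and $1-F_{\mathrm{mix}}(\lambda)>0$, this gives $m_R/n\xrightarrow{p}1-F_{\mathrm{mix}}(\lambda)$, and in particular $m_R\to\infty$ in probability. Conditional on $m_R=m$, the exceedances $Y_1,\ldots,Y_m$ are i.i.d.\ with CDF $G_{\mathrm{mix}}$ on $(\lambda,1]$. Both estimators are functions only of this conditional sample. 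So it suffices to prove convergence for a deterministic sample size $m\to\infty$, with $k=k_m\to\infty$ and $k_m/m\to0$, and then un-condition. Un-conditioning is routine: bound the probability of the bad event by $\Pp(m_R<M)$ plus a supremum over $m\ge M$ of the conditional bad-event probability.

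\textbf{Step 2: uniform consistency of $\widehat G_n$.} By construction, $\widehat G_n$ is exactly the empirical CDF of $Y_1,\ldots,Y_{m_R}$. Glivenko--Cantelli, or more quantitatively the DKW inequality $\Pp(\sup_x|\widehat G-G_{\mathrm{mix}}|>\varepsilon)\le 2e^{-2m\varepsilon^2}$, gives the uniform claim conditionally on $m_R=m$. Since this bound is uniform in $m$ and decreasing, Step 1 transfers it to $m_R$ random immediately.

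\textbf{Step 3: consistency of the kNN boundary density.} Use the probability integral transform. Write $U_{(k)}=G_{\mathrm{mix}}(Y_{(k)})$, which is distributed as the $k$-th order statistic of $m$ uniforms. Then $U_{(k)}\sim\mathrm{Beta}(k,m-k+1)$, with mean $k/(m+1)$ and variance of order $k/m^2$. Hence
\[
\frac{m\,U_{(k)}}{k}\xrightarrow{p}1 \quad\text{when } k\to\infty,\ k/m\to0 .
\]
In particular $U_{(k)}\xrightarrow{p}0$, and since $g_{\mathrm{mix}}(\lambda^+)>0$ this forces $Y_{(k)}\xrightarrow{p}\lambda$. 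By continuity of $g_{\mathrm{mix}}$ on a right neighborhood of $\lambda$, the mean value theorem gives
\[
U_{(k)}=G_{\mathrm{mix}}(Y_{(k)})-G_{\mathrm{mix}}(\lambda)=g_{\mathrm{mix}}(\xi)\,(Y_{(k)}-\lambda)
\]
for some $\xi\in(\lambda,Y_{(k)})$, so $g_{\mathrm{mix}}(\xi)\xrightarrow{p}g_{\mathrm{mix}}(\lambda^+)$. Therefore
\[
\widehat g(\lambda^+)=\frac{k}{m(Y_{(k)}-\lambda)}=\frac{k}{mU_{(k)}}\,g_{\mathrm{mix}}(\xi)\xrightarrow{p}g_{\mathrm{mix}}(\lambda^+),
\]
by the continuous mapping theorem. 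Here $Y_{(k)}>\lambda$ almost surely, so the ratio is well defined.

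\textbf{Main obstacle.} The delicate point is that $k_n$ is only controlled relative to the random $m_R$ via $k_n/m_R\xrightarrow{p}0$. Step 3 therefore cannot simply condition on a fixed pair $(m,k)$ unless $k_n$ is itself independent of the tail sample, e.g.\ a deterministic function of $m_R$; I would assume this, as the definition implicitly does. Under that assumption, I would handle the randomness with a Chebyshev bound on $mU_{(k)}/k-1$ that is uniform over all pairs with $k\ge K_0$ and $k/m\le\delta$. The bad event is then controlled by $\Pp(k_n<K_0)+\Pp(k_n/m_R>\delta)$ plus a term of order $1/K_0$, and letting $K_0\to\infty$, $\delta\to0$ completes the argument.
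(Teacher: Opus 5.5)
Your proposal is correct and follows essentially the same route as the paper: the law of large numbers for $m_R$, conditioning on $m_R$ and applying Glivenko--Cantelli for $\widehat G_n$, then the uniform order-statistic fact $G_{\mathrm{mix}}(Y_{(k_n)})/(k_n/m_R)\xrightarrow{p}1$ combined with a local linearization of $G_{\mathrm{mix}}$ at $\lambda$ (you use the mean value theorem where the paper uses the first-order expansion $g_{\mathrm{mix}}(\lambda^+)r+o(r)$). Your Beta-moment/Chebyshev argument and your explicit requirement that $k_n$ be independent of the tail values given $m_R$ supply details the paper leaves implicit.
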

\begin{proof}
Since $1-F_{\mathrm{mix}}(\lambda)>0$, the law of large numbers gives $m_R/n\xrightarrow{p}1-F_{\mathrm{mix}}(\lambda)$, hence $m_R\xrightarrow{p}\infty$. Conditional on $m_R$, the right-tail sample is i.i.d. from $G_{\mathrm{mix}}$, so the Glivenko--Cantelli theorem gives the uniform convergence of $\widehat G_n$ on $[\lambda,1]$.

It remains to check the boundary density estimator. Let $U_n=G_{\mathrm{mix}}(Y_{(k_n)})$. By the standard uniform order-statistic representation, $U_n/(k_n/m_R)\xrightarrow{p}1$ when $k_n\to\infty$ and $k_n/m_R\to0$. Continuity and positivity of $g_{\mathrm{mix}}(\lambda^+)$ imply
\begin{equation}
    G_{\mathrm{mix}}(\lambda+r)
    =
    g_{\mathrm{mix}}(\lambda^+)r+o(r),
    \qquad r\downarrow0.
\end{equation}
With $r_n=Y_{(k_n)}-\lambda$, the order-statistic result gives $G_{\mathrm{mix}}(\lambda+r_n)=U_n\to0$ in probability, hence $r_n\to0$ in probability and
\begin{equation}
    \widehat g(\lambda^+)
    =
    \frac{k_n/m_R}{r_n}
    =
    \frac{k_n/m_R}{G_{\mathrm{mix}}(\lambda+r_n)}
    \frac{G_{\mathrm{mix}}(\lambda+r_n)}{r_n}
    \xrightarrow{p}g_{\mathrm{mix}}(\lambda^+).
\end{equation}
\end{proof}

\begin{lemma}[Right-tail boundary dominance]
\label{lem:tail-density}
Under \cref{asmp:tail}, the mixture right-tail boundary density satisfies $g_{\mathrm{mix}}(\lambda^+)\ge g_0(\lambda^+)$.
\end{lemma}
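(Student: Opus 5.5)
The plan is to reduce the boundary-density comparison to a pointwise comparison of right-tail conditional CDFs near $\lambda$, and then pass to the limit in the difference quotient that defines $g(\lambda^+)$.

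\textbf{Step 1: write $G_{\mathrm{mix}}$ as a mixture of $G_0$ and $G_1$.} Assume $1-F_{\mathrm{mix}}(\lambda)>0$, as in \cref{prop:knn-consistency}, so that $G_{\mathrm{mix}}$ is well defined. From $F_{\mathrm{mix}}=\pi_0F_0+(1-\pi_0)F_1$ and \cref{eq:right-tail}, for $x\in[\lambda,1]$ I would compute
\begin{equation*}
G_{\mathrm{mix}}(x)=\frac{\pi_0\bigl(F_0(x)-F_0(\lambda)\bigr)+(1-\pi_0)\bigl(F_1(x)-F_1(\lambda)\bigr)}{1-F_{\mathrm{mix}}(\lambda)}
= w\,G_0(x)+(1-w)\,G_1(x),
\qquad w=\frac{\pi_0\bigl(1-F_0(\lambda)\bigr)}{1-F_{\mathrm{mix}}(\lambda)}\in[0,1].
\end{equation*}
If $1-F_0(\lambda)=0$ or $1-F_1(\lambda)=0$, the corresponding conditional CDF is undefined. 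In that case its weight vanishes and the identity holds trivially with that term dropped. The weights sum to one because $1-F_{\mathrm{mix}}(\lambda)=\pi_0(1-F_0(\lambda))+(1-\pi_0)(1-F_1(\lambda))$.

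\textbf{Step 2: deduce pointwise dominance.} Combining Step 1 with \cref{asmp:tail} gives $G_{\mathrm{mix}}(x)-G_0(x)=(1-w)\bigl(G_1(x)-G_0(x)\bigr)\ge0$ for all $x\in[\lambda,1]$. All three conditional CDFs vanish at $x=\lambda$.

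\textbf{Step 3: pass to the boundary densities.} For $r\downarrow0$, interpret the right-boundary densities as the right derivatives at $\lambda$, namely $g_{\mathrm{mix}}(\lambda^+)=\lim_{r\downarrow0}G_{\mathrm{mix}}(\lambda+r)/r$ and $g_0(\lambda^+)=\lim_{r\downarrow0}G_0(\lambda+r)/r$. Dividing the inequality of Step 2 by $r>0$ and letting $r\downarrow0$ yields $g_{\mathrm{mix}}(\lambda^+)\ge g_0(\lambda^+)$.

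\textbf{Main obstacle.} The only delicate point is the existence of these limits. For $g_{\mathrm{mix}}$, existence follows from the continuity hypothesis used in \cref{prop:knn-consistency}; for $g_0$ it follows from $F_0$ having a density, which is the paper's standing convention. If one prefers not to assume that $g_0(\lambda^+)$ exists, I would instead state the conclusion with $\limsup_{r\downarrow0}G_0(\lambda+r)/r$ in place of $g_0(\lambda^+)$, and the same division argument goes through unchanged. The identity $g_{\mathrm{mix}}=wg_0+(1-w)g_1$ then gives the equivalent statement $(1-w)\bigl(g_1(\lambda^+)-g_0(\lambda^+)\bigr)\ge0$, which serves as a consistency check.
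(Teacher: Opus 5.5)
Your proof is correct and follows the paper's argument. The paper uses the same weight $w=\pi_0(1-F_0(\lambda))/(1-F_{\mathrm{mix}}(\lambda))$ to write $g_{\mathrm{mix}}(\lambda^+)=w\,g_0(\lambda^+)+(1-w)g_1(\lambda^+)$, and obtains $g_1(\lambda^+)\ge g_0(\lambda^+)$ from \cref{asmp:tail} implicitly through the same difference-quotient limit. The only difference is that you form $G_{\mathrm{mix}}-G_0=(1-w)(G_1-G_0)\ge 0$ first and pass to the limit once, which makes the limit step explicit and means you never handle $g_1(\lambda^+)$ separately.
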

\begin{proof}
Write the right-tail mixture density as $g_{\mathrm{mix}}(\lambda^+)=w\,g_0(\lambda^+)+(1-w)g_1(\lambda^+)$, where $w=\pi_0(1-F_0(\lambda))/(1-F_{\mathrm{mix}}(\lambda))\in[0,1]$. \cref{asmp:tail} implies $g_1(\lambda^+)\ge g_0(\lambda^+)$ at the boundary, so the convex combination is at least $g_0(\lambda^+)$.
\end{proof}

\begin{lemma}[Asymptotic \metricabbr{} control of Storey-BH;~\citealp{storey2004strong}, Thm.~4]
\label{lem:storey}
Consider $n$ tests $\{H_{0,i}\}_{i=1}^n$ with input $p$-values $\{p_i\}_{i=1}^n\subset[0,1]$ and order statistics $p_{(1)}\le\cdots\le p_{(n)}$. Recall that $p_i$ is \emph{valid} under $H_{0,i}$ if $\Pp(p_i\le t\mid H_{0,i})\le t$ for all $t\in[0,1]$, as in \cref{eq:model-validity}; assume the inputs are valid in this sense. Let $n_0=|\{i:H_{0,i}\text{ is true}\}|$ denote the number of true nulls, and let $\pi_0\in(0,1]$ denote the limiting null proportion, $n_0/n\to\pi_0$. Let $\widehat F_0$ and $\widehat F$ denote the empirical null and overall CDFs of $\{p_i\}_{i=1}^n$, and assume both converge in probability, uniformly in $t\in[0,1]$, to continuous limits. Let $\widehat\pi_0$ satisfy $\widehat\pi_0\xrightarrow{p}\pi_0^\infty$ for some constant $\pi_0^\infty\ge\pi_0$. For $\alpha\in(0,1)$, the Storey-BH selection
\begin{equation}
\cS=\!\left\{i:p_i\le\frac{\alpha\,r^{*}}{\widehat\pi_0\,n}\right\},
\text{ where }
r^{*}=\max\!\left\{r:p_{(r)}\le\frac{\alpha\,r}{\widehat\pi_0\,n}\right\},
\end{equation}
with $r^{*}=0$ and $\cS=\emptyset$ when the set is empty, satisfies
\begin{equation}
\limsup_{n\to\infty}\E\!\left[\frac{\sum_{i=1}^n\mathds{1}\{i\in\cS,\;H_{0,i}\text{ is true}\}}{1\vee|\cS|}\right]\le\alpha.
\end{equation}
\end{lemma}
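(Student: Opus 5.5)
The plan follows the empirical-process argument of \citet{storey2004strong}. First, rewrite the step-up rule as a threshold rule. Define
\[
V(t)=\sum_{i}\ind\{p_i\le t,\ H_{0,i}\text{ true}\},\qquad R(t)=\sum_i\ind\{p_i\le t\},\qquad \widehat{\mathrm{FDR}}_\lambda(t)=\frac{\widehat\pi_0\,t}{(R(t)\vee 1)/n}.
\]
Then the Storey-BH set equals $\{i:p_i\le \widehat t\}$ with $\widehat t=\sup\{t\in[0,1]:\widehat{\mathrm{FDR}}_\lambda(t)\le\alpha\}$. This is the usual equivalence: the ratio $t/R(t)$ only jumps at the observed $p_{(r)}$ and is increasing between them. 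The realized contamination proportion is then $\fdpop(\cS)=V(\widehat t)/(R(\widehat t)\vee1)$, which is bounded by $1$. It therefore suffices to show $\limsup_n \fdpop(\cS)\le\alpha$ in probability, and then conclude with bounded convergence.

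Second, identify the limits. By hypothesis, $V(t)/n=(n_0/n)\widehat F_0(t)\to\pi_0 G_0(t)$ and $R(t)/n=\widehat F(t)\to G(t)$, uniformly in $t$, where $G_0$ and $G$ are continuous. Validity gives $\E[\widehat F_0(t)]\le t$ for each $t$. Bounded convergence then yields $G_0(t)\le t$ for all $t$. Together with $\widehat\pi_0\to\pi_0^\infty\ge\pi_0$, we get, uniformly on any interval $[\delta,1]$ on which $G$ is bounded away from $0$,
\[
\frac{V(t)}{R(t)}\;\to\;\frac{\pi_0G_0(t)}{G(t)}\;\le\;\frac{\pi_0^\infty t}{G(t)}=\lim_n \widehat{\mathrm{FDR}}_\lambda(t).
\]
Define the deterministic limit threshold $t_\infty=\sup\{t:\pi_0^\infty t/G(t)\le\alpha\}$.

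Third, split into two cases. If $t_\infty>0$ and $\pi_0^\infty t/G(t)$ crosses $\alpha$ transversally near $t_\infty$, uniform convergence gives $\widehat t\to t_\infty$ in probability. Along this sequence, $\fdpop(\cS)\to \pi_0G_0(t_\infty)/G(t_\infty)\le \pi_0^\infty t_\infty/G(t_\infty)\le\alpha$. To avoid relying on transversality, I would instead argue on the event $\{\widehat t\ge\delta\}$. There, $V(\widehat t)/R(\widehat t)\le \sup_{t\ge\delta}\,|V(t)/R(t)-\pi_0G_0(t)/G(t)| + \widehat{\mathrm{FDR}}_\lambda(\widehat t)\,(\pi_0^\infty/\widehat\pi_0)+o_p(1)$, and this is at most $\alpha+o_p(1)$ because $\widehat{\mathrm{FDR}}_\lambda(\widehat t)\le\alpha$.

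The main obstacle is the complementary event $\{\widehat t<\delta\}$, where $R(\widehat t)$ may be small and uniform convergence of ratios fails. My first attempt would use the defining inequality $R(\widehat t)\ge n\widehat\pi_0\widehat t/\alpha$. This gives $\fdpop\le \alpha V(\widehat t)/(n\widehat\pi_0\widehat t)$, so the task reduces to controlling $\sup_{t\le\delta}V(t)/(n_0 t)$. I would try a martingale/optional-stopping bound for the reverse-time process $V(t)/t$, as in Storey--Taylor--Siegmund's finite-sample proof. That bound gives $\E[V(\widehat t)/\widehat t]\lesssim n_0$ in expectation, which should suffice after taking expectations, since $\widehat\pi_0\ge\pi_0-o_p(1)$. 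If the dependence among the $p_i$ prevents this, I would fall back on Storey's Thm.~4 regularity, namely $t_\infty>0$, so that $\Pp(\widehat t<\delta)\to0$ for small $\delta$. The contribution of this event then vanishes, and letting $\delta\downarrow0$ finishes the proof.
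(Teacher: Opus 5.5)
\textbf{There is a genuine gap, at the event $\{\widehat t<\delta\}$, and it cannot be closed under the lemma's stated hypotheses.} The paper gives no proof of this lemma; it quotes Storey, Taylor and Siegmund (STS), Thm.~4. Your route is the STS empirical-process argument: the threshold form $\widehat t$, the uniform limits $G_0(t)\le t$ and $G$, and a split at $\delta$. The part on $\{\widehat t\ge\delta\}$ is sound; to keep $G$ away from $0$ there, use $R(\widehat t)/n\ge\widehat\pi_0\delta/\alpha$.

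Each of your two ways of closing the complementary event uses a hypothesis the lemma does not contain:
\begin{itemize}
\item The optional-stopping bound needs independent, uniform null $p$-values. It also needs a $\widehat\pi_0$ that is measurable at the start of the reverse-time filtration, which is why STS use $\widehat\pi_0(\lambda)$ and restrict to $t\le\lambda$. The lemma instead allows arbitrary dependence and any consistent $\widehat\pi_0$.
\item The fallback needs a nondegeneracy condition that the lemma never states.
\end{itemize}
Neither can be derived from what is given. Uniform convergence of $\widehat F_0$ and $\widehat F$ is blind to $o(n)$ of the $p$-values, while in the degenerate regime the selection is decided by $O(1)$ of them.

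In fact the lemma is false as written. Take all hypotheses null ($\pi_0=1$), $\widehat\pi_0\equiv1$ (so the procedure is BH), and $\alpha\le 2/3$.
\begin{itemize}
\item With probability $\alpha$, one uniformly chosen index receives a $\mathrm{Unif}(0,\alpha/n]$ $p$-value.
\item On a disjoint event of probability $\alpha/2$, two uniformly chosen indices receive independent $\mathrm{Unif}(\alpha/n,2\alpha/n]$ $p$-values.
\item All other $p$-values are i.i.d.\ $\mathrm{Unif}(\alpha,1]$.
\end{itemize}
Each $p_i$ is valid: $\Pp(p_i\le t)=t$ for $t\le 2\alpha/n$, it equals $2\alpha/n$ on $[2\alpha/n,\alpha]$, and it is linear from $2\alpha/n$ to $1$ on $[\alpha,1]$. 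Also $\widehat F_0=\widehat F\to(t-\alpha)_+/(1-\alpha)$ uniformly in probability, a continuous limit. Yet BH makes exactly one (respectively two) rejections on these two events, all false. Hence $\metricop(\cS)=3\alpha/2$ for every $n$.

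So the statement needs an extra hypothesis. The natural one is the strict nondegeneracy condition attached to the cited STS asymptotic result: there exists $t_1\in(0,1]$ with $\pi_0^\infty t_1<\alpha\,G(t_1)$. Then $\widehat{\mathrm{FDR}}_\lambda(t_1)\to\pi_0^\infty t_1/G(t_1)<\alpha$ in probability, so $\Pp(\widehat t\ge t_1)\to1$. Your $\{\widehat t\ge\delta\}$ argument with $\delta=t_1$, followed by bounded convergence, is then the whole proof, and the martingale step is unnecessary.

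Your formulation ``$t_\infty>0$'' is slightly too weak. If $\pi_0^\infty t/G(t)$ only touches $\alpha$ from above, then $t_\infty>0$. But $\widehat{\mathrm{FDR}}_\lambda$ dips below $\alpha$ near the touching point only with probability bounded away from $1$, so $\Pp(\widehat t<\delta)\not\to0$. Finally, the independence route would not help where the paper applies this lemma: the transformed values $\widetilde p_i$ share one calibration set and one fitted envelope, so they are dependent.
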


\subsection{Proof of \cref{prop:envelope}}
\label{proof:envelope}
\begin{proof}
The conclusion uses \cref{asmp:convex} and \cref{asmp:tail}. We first establish the bound at $x=\lambda$. The map $h(y)=\lambda y/(1+\lambda y)$ is increasing, so \cref{lem:tail-density} gives
\begin{equation}
    F_{\mathrm{fit}}(\lambda)
    \ge
    \frac{\lambda g_0(\lambda^+)}{1+\lambda g_0(\lambda^+)}
    =
    \frac{\lambda f_0(\lambda^+)}{1-F_0(\lambda)+\lambda f_0(\lambda^+)}.
\end{equation}
Let $A=F_0(\lambda)\in[0,1)$ and $B=\lambda f_0(\lambda^+)$. By the convexity of $F_0$ and $F_0(0)=0$, the secant slope $F_0(\lambda)/\lambda$ is at most the right derivative $f_0(\lambda^+)$, so $B\ge A$. Hence $B(1-A)\ge A(1-A)$ and $B/(1-A+B)\ge A$; therefore $F_{\mathrm{fit}}(\lambda)\ge F_0(\lambda)$.

For $x\in(0,\lambda]$, the convexity of $F_0$ on $[0,\lambda]$ with $F_0(0)=0$ implies that $F_0(x)/x$ is nondecreasing, so $F_0(x)\le F_0(\lambda)x/\lambda$. Combined with the bound at $\lambda$,
\begin{equation}
F_{\mathrm{fit}}(x)
=
\frac{F_{\mathrm{fit}}(\lambda)}{\lambda}x
\ge
\frac{F_0(\lambda)}{\lambda}x
\ge
F_0(x).
\end{equation}

For $x\in(\lambda,1]$, write $F_0(x)=F_0(\lambda)+(1-F_0(\lambda))G_0(x)$. \cref{asmp:tail} gives $G_{\mathrm{mix}}(x)\ge G_0(x)$, and we just showed $F_{\mathrm{fit}}(\lambda)\ge F_0(\lambda)$. The map $(a,b)\mapsto a+(1-a)b$ is nondecreasing in $a\in[0,1)$ for fixed $b\in[0,1]$ and nondecreasing in $b$ for fixed $a$, so
\begin{equation}
F_{\mathrm{fit}}(x)
=
F_{\mathrm{fit}}(\lambda)+\bigl(1-F_{\mathrm{fit}}(\lambda)\bigr)G_{\mathrm{mix}}(x)
\ge
F_0(\lambda)+\bigl(1-F_0(\lambda)\bigr)G_0(x)
=
F_0(x).\qedhere
\end{equation}
\end{proof}

\subsection{Proof of \cref{thm:fir}}
\label{proof:fdr}
By \cref{prop:knn-consistency}, the fitted envelope converges to $F_{\mathrm{fit}}$ uniformly on the two pieces of \cref{eq:fit-envelope}. By \cref{prop:envelope}, $F_{\mathrm{fit}}$ dominates $F_0$. For a null item and any $t\in(0,1)$,
\begin{equation}
\label{eq:null-superuniform}
    \Pp(F_{\mathrm{fit}}(p_i^*)\le t\mid H_{0,i})
    =
    \Pp(p_i^*\le F_{\mathrm{fit}}^{-1}(t)\mid H_{0,i})
    \le
    \Pp(p_i^*\le F_0^{-1}(t)\mid H_{0,i})
    \le t,
\end{equation}
so the transformed null values are asymptotically super-uniform. The estimator \cref{eq:pi0} is asymptotically conservative because
\begin{equation}
\label{eq:pi0-limit}
    \widehat\pi_0
    \xrightarrow{p}
    \frac{1-F_{\mathrm{mix}}(\lambda)}{1-F_{\mathrm{fit}}(\lambda)}
    \ge
    \frac{\pi_0(1-F_0(\lambda))}{1-F_0(\lambda)}
    =
    \pi_0,
\end{equation}
using $F_{\mathrm{fit}}(\lambda)\ge F_0(\lambda)$.

We now invoke \cref{lem:storey} on the transformed inputs $\widetilde p_i=\widehat F_{\mathrm{fit}}(p_i^*)$ with the selection in \cref{eq:final-set}. \cref{eq:null-superuniform} verifies the validity of $\{\widetilde p_i\}$ under the null (in the asymptotic sense, since $\widehat F_{\mathrm{fit}}\to F_{\mathrm{fit}}$ uniformly by \cref{prop:knn-consistency}). The Glivenko--Cantelli theorem applied to the i.i.d.\ null sub-sample and to the full i.i.d.\ sample $\{p_i^*\}\sim F_{\mathrm{mix}}$, combined with the monotonicity of $\widehat F_{\mathrm{fit}}$, gives the required uniform convergence of the null and overall empirical CDFs. The conservative null-proportion limit is supplied by \cref{eq:pi0-limit}. \cref{lem:storey} therefore yields $\limsup_{n\to\infty}\metricop(\cS)\le\alpha$. \qed

\section{Algorithm Summary for \baseline{}}
\label{app:jmcs}

\Cref{alg:jmcs} summarizes \baseline{}, the max-$p$ baseline used in our experiments. It computes model-wise conformal $p$-values, aggregates them by $p_i^*=\max_k p_i^k$, and applies BH to the aggregated values to return the shared selected benchmark.

\begin{algorithm}[h]
\caption{\baselinefull{} (\baseline{})}
\label{alg:jmcs}
\begin{algorithmic}[1]
\REQUIRE Candidate items $\{x_i\}_{i=1}^n$, audited models $\{\theta_k\}_{k=1}^K$, calibration data $\cD_{\mathrm{cal}}=\{x_i\}_{i=n+1}^{n+m}$, target level $\alpha$.
\FOR{$k=1,\ldots,K$}
    \STATE Construct conformal $p$-values $\{p_i^k\}_{i=1}^n$ via \cref{eq:prelim-model-p}.
\ENDFOR
\STATE Aggregate $p_i^*=\max_{1\le k\le K}p_i^k$ for $i=1,\ldots,n$.
\STATE Sort the maxima $p_{(1)}^*\le\cdots\le p_{(n)}^*$ and compute $r^{*}=\max\{r:p_{(r)}^*\le\alpha\,r/n\}$.
\STATE Return $\cS=\{i:p_i^*\le\alpha\,r^{*}/n\}$.
\end{algorithmic}
\end{algorithm}

% \section{Synthetic Counter-Example for \cref{tab:motivation-union-intersection}}
% \label{app:synthetic}

\section{Experimental details}
\label{app:exp-details}

Our experiments obtain ground-truth membership labels by controlled fine-tuning. For each audited model, an item is labeled as a member if it is included in that model's fine-tuning corpus, and as a non-member otherwise.

\subsection{Tail-threshold selection}
\label{app:lambda-selection}

In all default runs of \abbr{}, we choose the right-tail threshold $\lambda$ from the grid $\{0.5,0.6,0.7,0.8,0.9\}$. The criterion is to minimize the fitted left-branch slope. For each candidate $\lambda$, we estimate $\widehat g(\lambda^+)$ from the right-tail sample $\{p_i^*:p_i^*>\lambda\}$ and form the anchor
\begin{equation}
    \widehat F_{\mathrm{fit}}(\lambda)
    =
    \frac{\lambda\,\widehat g(\lambda^+)}
    {1+\lambda\,\widehat g(\lambda^+)}.
\end{equation}
We then set
\begin{equation}
\label{eq:lambda-selection}
    \widehat\lambda
    =
    \arg\min_{\lambda}
    \frac{\widehat F_{\mathrm{fit}}(\lambda)}{\lambda}
    =
    \arg\min_{\lambda}
    \frac{\widehat g(\lambda^+)}
    {1+\lambda\,\widehat g(\lambda^+)}.
\end{equation}
This rule selects the threshold whose fitted left branch has the smallest slope. The selected $\widehat\lambda$ is then used in \cref{eq:fit-envelope} to construct $\widehat F_{\mathrm{fit}}$, transform the max-$p$ values, and estimate $\widehat\pi_0$.

\subsection{Dataset split setting}
\label{app:data-split}

This subsection explains how we construct the controlled member/non-member structure that supplies the ground-truth joint-contamination labels used in our experiments.

Starting from the candidate pool $\cP$ of a benchmark, we split the items into three disjoint parts: a \emph{clean block} $\cP_0$, a \emph{calibration block} $\cP_{\mathrm{cal}}$, and a \emph{remainder block} $\cP_{\mathrm{rem}}$, with sizes proportional to $a$, $b$, and $1 - a - b$ respectively. For each item $x_i$ and each audited model $\theta_k$, the membership label $M_i^k \in \{0, 1\}$ is then drawn block-by-block:
\begin{equation}
\label{eq:joint-purity-prevalence}
\Pp(M_i^k = 0) \;=\;
\begin{cases}
1, & x_i \in \cP_0,\\
0, & x_i \in \cP_{\mathrm{cal}},\\
1 - \rho, & x_i \in \cP_{\mathrm{rem}}.
\end{cases}
\end{equation}
The three blocks play distinct roles. Items in $\cP_0$ are never used to fine-tune any audited model, so every item in this block is jointly pure by construction. Items in $\cP_{\mathrm{cal}}$ are included in the fine-tuning corpus of \emph{every} audited model and are reused as the shared calibration set $\cD_{\mathrm{cal}}$ in \Cref{sec:scaffold}. Items in $\cP_{\mathrm{rem}}$ supply the per-model contamination. For each audited model $\theta_k$, we independently sample a fraction $\rho$ of the remainder uniformly at random and add those items to $\theta_k$'s fine-tuning corpus. We call $\rho$ the \emph{training fraction}; it is the per-model sampling fraction \emph{within the remainder block}, not within the full pool. Because the assignments on $\cP_{\mathrm{rem}}$ are independent across the $K$ audited models, the joint-purity probability $\Pp(\bigvee_{k=1}^{K} M_i^k = 0)$ equals $1$ on $\cP_0$ and $(1 - \rho)^K$ on $\cP_{\mathrm{rem}}$. Selection is performed on $\cD_{\mathrm{test}} := \cP_0 \cup \cP_{\mathrm{rem}}$, with the calibration block held out. Throughout the main evaluation we use $a = b = 0.3$ and $\rho = 0.125$, and we vary $\rho$ in the sensitivity analysis of \Cref{sec:sensitivity}.

\begin{table}[h]
\centering
\caption{Dataset split statistics at $\rho = 0.125$. Counts are reported for a single audited model $\theta_k$: ``Member'' denotes $M_i^k = 1$ and ``Non-member'' denotes $M_i^k = 0$. Calibration items are shared members of all audited models, and the per-model test-set counts are identical across $k$.}
\label{tab:dataset-stats}
\vspace{4pt}
\begin{tabular}{llccc}
\toprule
\textbf{Dataset} & \textbf{Component} & \textbf{Member} & \textbf{Non-member} & \textbf{Total} \\
\midrule
\multirow{3}{*}{WikiMIA} & Test set $\cD_{\mathrm{test}}$ & 39 & 514 & 553 \\
 & Calibration set $\cD_{\mathrm{cal}}$ & 236 & 0 & 236 \\
 & Candidate pool $\cP$ & 275 & 514 & 789 \\
\midrule
\multirow{3}{*}{ArXivTection} & Test set $\cD_{\mathrm{test}}$ & 39 & 512 & 551 \\
 & Calibration set $\cD_{\mathrm{cal}}$ & 235 & 0 & 235 \\
 & Candidate pool $\cP$ & 274 & 512 & 786 \\
\midrule
\multirow{3}{*}{HackerNews} & Test set $\cD_{\mathrm{test}}$ & 50 & 650 & 700 \\
 & Calibration set $\cD_{\mathrm{cal}}$ & 300 & 0 & 300 \\
 & Candidate pool $\cP$ & 350 & 650 & 1000 \\
\midrule
\multirow{3}{*}{DM-Math} & Test set $\cD_{\mathrm{test}}$ & 50 & 650 & 700 \\
 & Calibration set $\cD_{\mathrm{cal}}$ & 300 & 0 & 300 \\
 & Candidate pool $\cP$ & 350 & 650 & 1000 \\
\midrule
\multirow{3}{*}{GitHub} & Test set $\cD_{\mathrm{test}}$ & 50 & 650 & 700 \\
 & Calibration set $\cD_{\mathrm{cal}}$ & 300 & 0 & 300 \\
 & Candidate pool $\cP$ & 350 & 650 & 1000 \\
\midrule
\multirow{3}{*}{Pile-CC} & Test set $\cD_{\mathrm{test}}$ & 50 & 650 & 700 \\
 & Calibration set $\cD_{\mathrm{cal}}$ & 300 & 0 & 300 \\
 & Candidate pool $\cP$ & 350 & 650 & 1000 \\
\midrule
\multirow{3}{*}{PubMed Central} & Test set $\cD_{\mathrm{test}}$ & 50 & 650 & 700 \\
 & Calibration set $\cD_{\mathrm{cal}}$ & 300 & 0 & 300 \\
 & Candidate pool $\cP$ & 350 & 650 & 1000 \\
\bottomrule
\end{tabular}
\end{table}

\paragraph{Evaluation protocol.}
All reported \fdpabbr{} and Power values are Monte-Carlo means over 500 repetitions on the fixed three-block partition of \Cref{app:data-split}. In each repetition we draw $80\%$ of $\cD_{\mathrm{cal}}$ and $80\%$ of $\cD_{\mathrm{test}}$ uniformly at random; the three-block split itself is held fixed across repetitions. 

\subsection{Fine-tuning setting}
\label{app:finetune}

We fine-tune all audited models using LoRA-SFT. The LoRA adapter is configured with rank $r=16$, scaling parameter $\alpha_{\mathrm{LoRA}}=32$, dropout $0.05$, no bias terms, and task type \texttt{CAUSAL\_LM}. We train all audited models with AdamW using learning rate $10^{-3}$, weight decay $5\times 10^{-4}$, $10$ warmup steps, a cosine learning-rate schedule, and per-device batch size $8$ for both training and evaluation. The number of fine-tuning epochs is backbone-specific: GPT-NeoX-20B is trained for $5$ epochs, while the Pythia and LLaMA backbones are trained for $3$ epochs.

\subsection{Computational cost}
\label{app:compute-budget}

All audited models are fine-tuned with the LoRA-SFT protocol of \cref{app:finetune}, using PyTorch FSDP on NVIDIA RTX 4090 GPUs. Fine-tuning one audited model on one dataset takes about $5$ GPU-min for LLaMA-7B or Pythia-6.9B ($3$ epochs, $2$ GPUs), and about $15$ GPU-min for GPT-NeoX-20B ($5$ epochs, $4$ GPUs). After the scores $T(x;\theta_k)$ are extracted, \abbr{} only performs NumPy operations on the $(n_{\mathrm{test}}, K)$ score matrix, including conformal $p$-value computation, max-$p$ aggregation, envelope fitting, Storey estimation, and Storey-BH; every evaluated setting in \cref{sec:experiments} runs in under one minute on a single CPU core and requires no GPU.

\section{Additional Experimental Results}
\label{app:extra-experiments}

\subsection{Synthetic setup for naive composition failure}
\label{app:synthetic-motivation}

We generate a pool of $1200$ candidate items for $K=4$ audited models $\{\theta_k\}_{k=1}^K$, and split it into a calibration set $\cD_{\mathrm{cal}}$ of size $n_{\mathrm{cal}}=360$ ($30\%$ of the pool) and an audit set $\cD_{\mathrm{test}}$ of size $n_{\mathrm{test}}=840$. Items in $\cD_{\mathrm{cal}}$ are members of every audited model by construction, matching the shared-member calibration assumption of \Cref{sec:scaffold}. Items in $\cD_{\mathrm{test}}$ receive independent Bernoulli labels $M_i^k\stackrel{\mathrm{iid}}{\sim}\mathrm{Bernoulli}(0.30)$ across both $i$ and $k$, so the joint-purity probability of an audit item is $(1-0.30)^K\approx 0.24$. For each item $i$ and audited model $\theta_k$, the detection score is
\begin{equation}
\label{eq:synthetic-score}
T_i^k = \mu\cdot M_i^k + \varepsilon_i^k,\qquad \varepsilon_i^k\stackrel{\mathrm{iid}}{\sim}\cN(0,1),
\end{equation}
with signal strength $\mu=4$, so members and non-members are $\cN(\mu,1)$ and $\cN(0,1)$, and the model-wise scores are independent across $k$.

\subsection{MIMIR Subsets}
\label{app:mimir}

We further evaluate \abbr{} on five subsets (HackerNews, DMMath, GitHub, Pile-CC, PubMed) of the MIMIR benchmark~\citep{duan2024membership}, fine-tuning $K=8$ audited models from Pythia-6.9B and LLaMA-7B per subset under the same protocol as \cref{sec:setup}. \cref{fig:mimir-robust} reports realized \fdpabbr{} and Power for the four detection scores in \cref{sec:setup} at $\alpha\in\{0.1,0.2,0.3,0.4,0.5\}$. The realized \fdpabbr{} stays at or below the target diagonal throughout, confirming that \metricabbr{} control extends beyond the WikiMIA and ArXivTection candidate pools used in \cref{sec:experiments}.

\begin{figure}[h]
    \centering
    \begin{subfigure}{\textwidth}
        \centering
        \includegraphics[width=\textwidth]{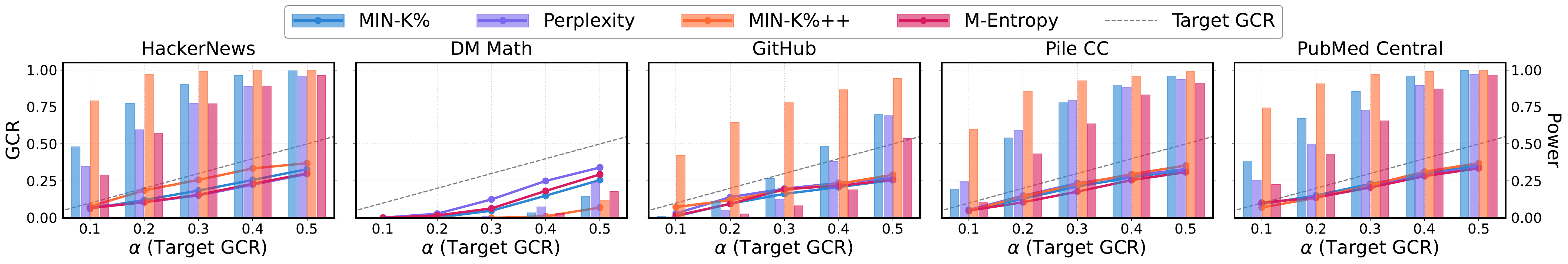}
        \caption{Pythia-6.9B}
    \end{subfigure}
    \\[4pt]
    \begin{subfigure}{\textwidth}
        \centering
        \includegraphics[width=\textwidth]{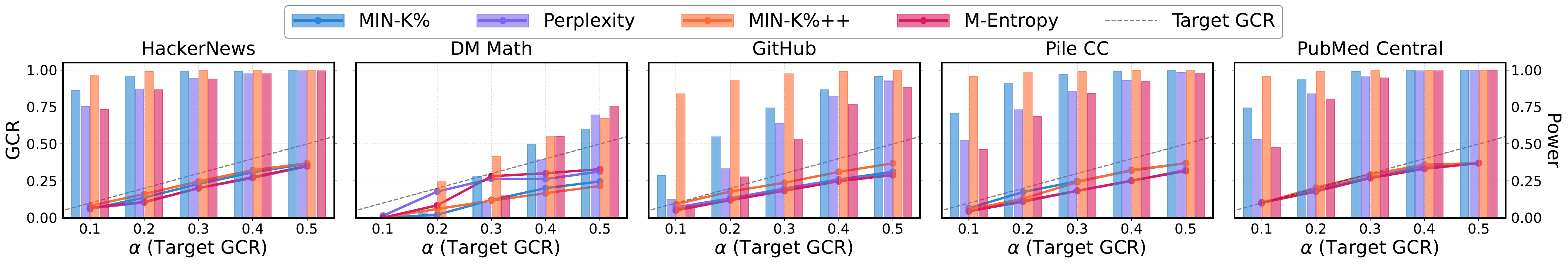}
        \caption{LLaMA-7B}
    \end{subfigure}
    \caption{Realized \fdpabbr{} and Power of \abbr{} on five MIMIR subsets at $K=8$ across Pythia-6.9B and LLaMA-7B. \fdpabbr{} curves (lines, left axis) and Power bars (right axis) for four detection scores at $\alpha\in\{0.1,0.2,0.3,0.4,0.5\}$ on HackerNews, DMMath, GitHub, Pile-CC, and PubMed; the dashed line is the target \metricabbr{}.}
    \label{fig:mimir-robust}
\end{figure}

\subsection{Power on WikiMIA}
\label{app:power-wikimia}

\Cref{tab:power-wikimia} reports Power for \abbr{} and \baseline{} on WikiMIA at $K=16$, mirroring the ArXivTection table of \Cref{tab:power-arxiv}. \abbr{} attains higher Power than \baseline{} across the evaluated targets, model families, and detection scores, with the largest gain on LLaMA-7B with Perplexity at $\alpha=0.3$, where Power rises from $0.088$ to $0.439$. This pattern matches the aggregate finding of \Cref{sec:experiments} and the super-uniformity tax interpretation of \Cref{sec:scaffold}.

\begin{table}[t]
\centering
\caption{Power on WikiMIA at $K=16$. Reported values are average $\mathrm{Power}$ with standard error. ``Base'' refers to \baseline{} and ``Ours'' refers to \abbr{}. \textbf{Bold} numbers mark the better result in each pair.}
\label{tab:power-wikimia}
\vspace{4pt}
\renewcommand\arraystretch{1.2}
\resizebox{\textwidth}{!}{
\begin{tabular}{ll cc cc cc}
\toprule
\multirow{2}{*}{\textbf{Model}} & \multirow{2}{*}{\textbf{Score}} & \multicolumn{2}{c}{$\alpha=0.1$} & \multicolumn{2}{c}{$\alpha=0.2$} & \multicolumn{2}{c}{$\alpha=0.3$} \\
\cmidrule(lr){3-4} \cmidrule(lr){5-6} \cmidrule(lr){7-8}
& & Base & Ours & Base & Ours & Base & Ours \\
\midrule
\multirow{4}{*}{NeoX-20B} & Perplexity & 0.000{\scriptsize$\pm$0.000} & \textbf{0.094{\scriptsize$\pm$0.005}} & 0.089{\scriptsize$\pm$0.003} & \textbf{0.394{\scriptsize$\pm$0.004}} & 0.326{\scriptsize$\pm$0.002} & \textbf{0.565{\scriptsize$\pm$0.005}} \\
& Min-K\%++ & 0.382{\scriptsize$\pm$0.001} & \textbf{0.443{\scriptsize$\pm$0.001}} & 0.499{\scriptsize$\pm$0.001} & \textbf{0.574{\scriptsize$\pm$0.002}} & 0.580{\scriptsize$\pm$0.001} & \textbf{0.643{\scriptsize$\pm$0.001}} \\
& Min-K\% & 0.006{\scriptsize$\pm$0.001} & \textbf{0.067{\scriptsize$\pm$0.003}} & 0.152{\scriptsize$\pm$0.003} & \textbf{0.358{\scriptsize$\pm$0.003}} & 0.358{\scriptsize$\pm$0.001} & \textbf{0.529{\scriptsize$\pm$0.003}} \\
& M-Entropy & 0.000{\scriptsize$\pm$0.000} & \textbf{0.017{\scriptsize$\pm$0.002}} & 0.041{\scriptsize$\pm$0.002} & \textbf{0.283{\scriptsize$\pm$0.004}} & 0.229{\scriptsize$\pm$0.003} & \textbf{0.460{\scriptsize$\pm$0.005}} \\
\midrule
\multirow{4}{*}{LLaMA-7B} & Perplexity & 0.000{\scriptsize$\pm$0.000} & \textbf{0.010{\scriptsize$\pm$0.001}} & 0.010{\scriptsize$\pm$0.001} & \textbf{0.171{\scriptsize$\pm$0.005}} & 0.088{\scriptsize$\pm$0.002} & \textbf{0.439{\scriptsize$\pm$0.007}} \\
& Min-K\%++ & 0.120{\scriptsize$\pm$0.005} & \textbf{0.400{\scriptsize$\pm$0.002}} & 0.456{\scriptsize$\pm$0.001} & \textbf{0.527{\scriptsize$\pm$0.001}} & 0.526{\scriptsize$\pm$0.001} & \textbf{0.592{\scriptsize$\pm$0.002}} \\
& Min-K\% & 0.018{\scriptsize$\pm$0.001} & \textbf{0.133{\scriptsize$\pm$0.002}} & 0.178{\scriptsize$\pm$0.001} & \textbf{0.316{\scriptsize$\pm$0.002}} & 0.301{\scriptsize$\pm$0.001} & \textbf{0.469{\scriptsize$\pm$0.003}} \\
& M-Entropy & 0.000{\scriptsize$\pm$0.000} & \textbf{0.006{\scriptsize$\pm$0.001}} & 0.007{\scriptsize$\pm$0.001} & \textbf{0.130{\scriptsize$\pm$0.004}} & 0.064{\scriptsize$\pm$0.002} & \textbf{0.363{\scriptsize$\pm$0.006}} \\
\midrule
\multirow{4}{*}{Pythia-6.9B} & Perplexity & 0.000{\scriptsize$\pm$0.000} & \textbf{0.008{\scriptsize$\pm$0.001}} & 0.012{\scriptsize$\pm$0.001} & \textbf{0.048{\scriptsize$\pm$0.002}} & 0.043{\scriptsize$\pm$0.001} & \textbf{0.168{\scriptsize$\pm$0.006}} \\
& Min-K\%++ & 0.000{\scriptsize$\pm$0.000} & \textbf{0.003{\scriptsize$\pm$0.001}} & 0.010{\scriptsize$\pm$0.001} & \textbf{0.232{\scriptsize$\pm$0.007}} & 0.181{\scriptsize$\pm$0.004} & \textbf{0.486{\scriptsize$\pm$0.003}} \\
& Min-K\% & 0.000{\scriptsize$\pm$0.000} & \textbf{0.001{\scriptsize$\pm$0.000}} & 0.003{\scriptsize$\pm$0.001} & \textbf{0.036{\scriptsize$\pm$0.003}} & 0.029{\scriptsize$\pm$0.002} & \textbf{0.311{\scriptsize$\pm$0.007}} \\
& M-Entropy & 0.000{\scriptsize$\pm$0.000} & \textbf{0.004{\scriptsize$\pm$0.001}} & 0.011{\scriptsize$\pm$0.001} & \textbf{0.034{\scriptsize$\pm$0.002}} & 0.031{\scriptsize$\pm$0.001} & \textbf{0.125{\scriptsize$\pm$0.005}} \\
\bottomrule
\end{tabular}
}
\end{table}

\subsection{Cross-experiment variability of \abbr{} on Min-K\%++}
\label{app:minkpp-stddev}

\Cref{fig:minkpp-fdr-power-std} shows the cross-experiment variability of \abbr{} with Min-K\%++ at $K=16$. The \fdpabbr{} remains aligned with the target diagonal, and the shaded band is narrow, with its upper edge only slightly above the target at smaller $\alpha$. Power also varies little, supporting the stability of the gains reported in \cref{tab:power-arxiv}.

\begin{figure}[h]
    \centering
    \includegraphics[width=\textwidth]{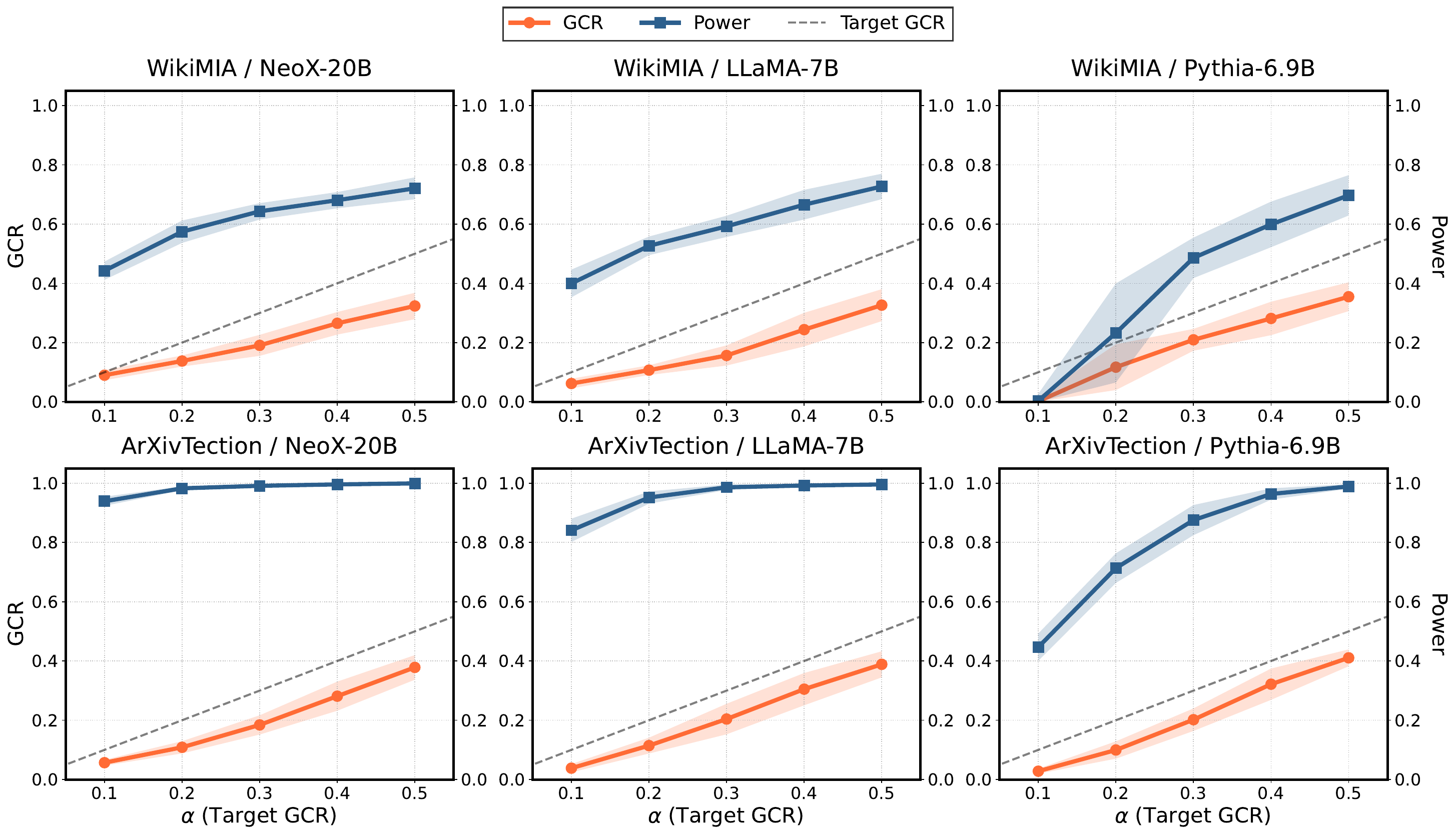}
    \caption{\abbr{} with Min-K\%++ at $K=16$: cross-experiment dispersion. Curves report mean realized \fdpabbr{} (left axis, orange) and Power (right axis, blue) on WikiMIA (top row) and ArXivTection (bottom row), for GPT-NeoX-$20$B, LLaMA-$7$B, and Pythia-$6.9$B. Shaded bands are $\pm 1$ standard deviation, clipped to $[0,1]$; the dashed diagonal is the target \metricabbr{}.}
    \label{fig:minkpp-fdr-power-std}
\end{figure}

\subsection{Mixed-family audit pool}
\label{app:mixed-family}

The main-text protocol audits $K$ fine-tuned models initialized from a single backbone, which leaves open whether \abbr{}'s control depends on the audited models sharing an architecture and parameter scale. We complement that protocol with a heterogeneous audit pool composed of $K=16$ fine-tuned variants drawn from $8$ public LLM configurations spanning three families and a wide scale range: Pythia~\citep{biderman2023pythia} at $1.4$B, $2.8$B, $6.9$B, and $12$B, GPT-NeoX-$20$B~\citep{black-etal-2022-gpt}, and LLaMA~\citep{touvron2023llama} at $7$B, $13$B, and $30$B. Each of the $8$ configurations contributes two independently fine-tuned models over disjoint contamination assignments on the remainder block $\cP_{\mathrm{rem}}$, so the audit pool varies architecture, parameter scale, and per-model training data jointly. The shared-member calibration block $\cP_{\mathrm{cal}}$ from \cref{app:data-split} is reused by every variant unchanged; the rest of the protocol --- test split, four detection scores, and the target grid $\alpha\in\{0.1,0.2,0.3,0.4,0.5\}$ --- matches \cref{sec:setup}.

\cref{fig:mixed-family-multiscore} reports realized \fdpabbr{} and Power on this mixed-family pool for the four detection scores at $K=16$, on WikiMIA and ArXivTection. The realized \fdpabbr{} stays at or below the target diagonal throughout; Min-K\%++ delivers the highest Power on ArXivTection while the four scores cluster more tightly on WikiMIA. The picture matches \cref{sec:experiments}: \abbr{} controls \metricabbr{} without score-specific tuning and without architecture- or scale-matched audited models.

\begin{figure}[h]
    \centering
    \includegraphics[width=\textwidth]{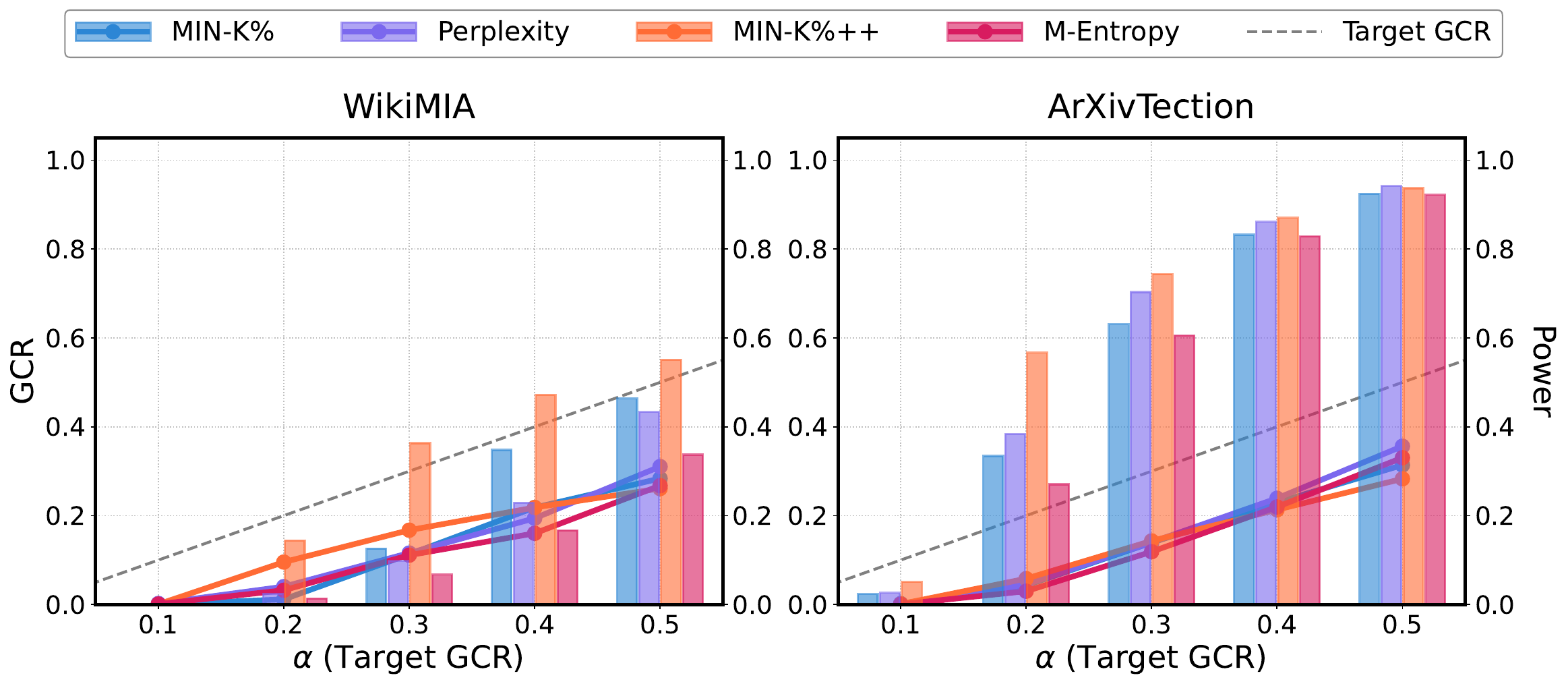}
    \caption{\abbr{} on a mixed-family audit pool at $K=16$. Realized \fdpabbr{} (lines, left axis) and Power (bars, right axis) for four detection scores at $\alpha\in\{0.1,0.2,0.3,0.4,0.5\}$ on WikiMIA (left) and ArXivTection (right). The $K=16$ pool combines $8$ fine-tuned configurations spanning Pythia ($1.4$B/$2.8$B/$6.9$B/$12$B), GPT-NeoX-$20$B, and LLaMA ($7$B/$13$B/$30$B), with two independently fine-tuned models per configuration. The dashed line is the target \metricabbr{}.}
    \label{fig:mixed-family-multiscore}
\end{figure}

\section{Feasibility of Constructing the Shared-Member Calibration Set}
\label{app:calibration-feasibility}

The calibration assumption in \cref{sec:scaffold} requires a pool of items that are shared members of every audited model. We argue this is achievable in practice on three independent grounds.

\begin{enumerate}
    \item \textbf{Public benchmarks expose training splits that are routinely mixed into LLM training.} Public benchmarks release explicit training splits, and the contamination literature documents that such training data is in many cases deliberately incorporated into LLM pretraining or fine-tuning. For instance, OpenAI openly acknowledges mixing the MATH and GSM8K training sets into GPT-4's training~\citep{achiam2023gpt}, and surveys catalog the same practice across model families including GLM-130B, Qwen, Nemotron-4, InternLM-2, and MiniCPM~\citep{zhou2023don,xu2024benchmark}. Benchmark training splits thus constitute a sizable population of items that are members of many deployed models by construction.

    \item \textbf{Model providers openly disclose canonical web sources as documented pretraining components.} Modern LLM families often identify canonical web sources as components of their pretraining mixes, including Common Crawl, Wikipedia, arXiv, and GitHub. Llama publishes the full breakdown (CommonCrawl, C4, GitHub, Wikipedia, arXiv, StackExchange) in its technical report~\citep{touvron2023llama}, and Falcon similarly releases the RefinedWeb extraction of Common Crawl~\citep{penedo2023refinedweb}. Decontamination audits further confirm substantial overlap between such corpora and widely used benchmarks~\citep{yang2023rethinking}, so samples drawn from these disclosed sources can be treated as shared members of the corresponding model families.

    \item \textbf{Regulatory mandates are making training-data disclosure a legal requirement.} Beyond voluntary disclosure, Article~53(1)(d) of the EU AI Act (Regulation (EU) 2024/1689) requires providers of general-purpose AI (GPAI) models placed on the EU market, including OpenAI, Google, Anthropic, and others, to publish a ``sufficiently detailed summary'' of the content used for training, following a template from the EU AI Office~\citep{council2024regulation}. These obligations apply to newly placed GPAI models from 2~August~2025. Enforcement powers, including fines of up to EUR~15\,M or 3\% of global annual turnover, start on 2~August~2026, and pre-existing GPAI models must comply by 2~August~2027. As these summaries become available, auditors will have more high-confidence evidence about which public sources and data collections were used across major deployed models, making the shared-member assumption increasingly easy to satisfy.
\end{enumerate}

Each of the three grounds above provides realistic candidates for assembling the calibration pool in deployed audit settings.

% \clearpage
% \input{checklist.tex}

\end{document}